\newcommand{\AC}{\ensuremath{\mathcal{AC}}}
\title[ASP(\AC)]{ASP(\AC): Answer Set Programming with\\ Algebraic Constraints}
\author[T. Eiter and R. Kiesel]{THOMAS EITER and RAFAEL KIESEL\\
         Technical University Vienna, Vienna, Austria\\
         \email{\{thomas.eiter,rafael.kiesel\}@tuwien.ac.at}}
\crefname{equation}{}{}
\newtheorem{theorem}{Theorem}
\newtheorem{define}[theorem]{Definition} 
\newtheorem{proposition}[theorem]{Proposition}
\newtheorem{example}{Example}
\DeclareMathOperator{\supp}{supp}
\newcommand{\rem}[1]{\ensuremath{{\rm \texttt{#1}}}}
\renewcommand{\overline}[1]{\vec{#1}}
\newcommand{\Sig}{\sigma}
\newcommand{\sorting}{r}
\newcommand{\splus}{\ensuremath{\bm{+}}}
\newcommand{\stimes}{\ensuremath{*}}
\newcommand{\bplus}{\ensuremath{\textstyle\Sigma}}
\newcommand{\btimes}{\ensuremath{\textstyle\Pi}}
\newcommand{\invplus}{\ensuremath{\bm{-}}}
\newcommand{\invtimes}[1]{\ensuremath{#1^{\bm{-1}}}}
\newcommand{\srzero}{\ensuremath{e_{\oplus}}}
\newcommand{\srone}{\ensuremath{e_{\otimes}}}
\newcommand{\srsplus}{\ensuremath{{\oplus}}}
\newcommand{\srstimes}{\ensuremath{{\otimes}}}
\newcommand{\srbplus}{\ensuremath{{\textstyle\bigoplus}}}
\newcommand{\srbtimes}{\ensuremath{{\textstyle\bigotimes}}}
\newenvironment{myitemize}
{\begin{list}{$\bullet$}{%
\setlength{\topsep}{2pt} 
\setlength{\leftmargin}{0pt} 
\setlength{\itemindent}{10pt}}} 
{\end{list}}
\newcounter{myenumctr}
\newcommand{\equationnumberhere}{%
\global\advance\c@equation\@ne\def\@currentlabel{\p@equation\theequation}%
(\@currentlabel)}
\newcommand{\myciteyear}[1]{\nocite{#1}\citeyear{#1}}
\begin{document}
\maketitle

\begin{abstract}
    Weighted Logic is a powerful tool for the specification of
   calculations over semirings that depend on qualitative
    information. Using a novel combination of Weighted Logic and
    Here-and-There (HT) Logic, in which this dependence is based on
    intuitionistic grounds, we introduce Answer Set Programming with
    Algebraic Constraints (ASP(\AC)), where rules may contain constraints
    that compare semiring values to weighted formula
    evaluations. Such constraints provide streamlined access to a manifold
    of constructs available in ASP, like aggregates, choice
    constraints, and arithmetic operators. They extend some of them and
    provide a generic framework for defining programs with algebraic computation, which can be fruitfully used e.g.\ for provenance semantics of datalog programs. While undecidable in general, expressive
    fragments of ASP(\AC)\;can be exploited for effective problem solving in
    a rich framework. This work is under consideration for acceptance in Theory and Practice of Logic Programming.
\end{abstract}

\begin{keywords}
Weighted Logic, Here-and-There Logic, Answer Set Programming, Constraints
\end{keywords}
\allowdisplaybreaks

\section{Introduction}
Answer Set Programming (ASP) is a well-known non-monotonic declarative programming para\-digm. Due to the need for more expressiveness and succinct descriptions, it has been extended with many different constructs, ranging from nested expressions \cite{lifschitz1999nested} to weight constraints with conditionals \cite{niemela1999stable} and aggregates \cite{ferraris2011logic}. A more recent trend combines ASP with Constraint Processing (CP) employing both solvers for ASP and Satisfaction Modulo Theories (SMT), cf.\ \cite{lierler2014relating,DBLP:journals/ki/Janhunen18}. Many of these approaches keep nonmonotonicity on the ASP side, but also its use on the CP side was explored \cite{DBLP:journals/tplp/AzizCS13}.
Cabalar et al.~\myciteyear{cabalar2020asp,cabalar2020uniform} recently introduced a general non-monotonic integration of CP into ASP providing aggregates and conditionals for the specification of numeric values that depend on the satisfaction of formulas. The conditionals can be 
flexibly evaluated under the \emph{vicious circle}\/ ($vc$) or the \emph{definedness} principles ($df$). Their approach treats constraints as black boxes, leaving their syntax open,
and incorporates many previously introduced constructs.

An important feature of constraint extensions is the possibility to express (in)equations involving computations on an algebraic structure, whose solutions are accessible by the ASP rules. Basic such structures are  \emph{semirings} $\mathcal{R} = (R, \srsplus, \srstimes, \srzero, \srone)$, where $\srsplus$ and $\srstimes$ are addition and multiplication with neutral elements $\srzero$ and $\srone$, respectively. They have been considered for constraint semantics e.g.\  in \cite{bistarelli1997semiring} and were used  to provide rule languages with parameterised calculation in a uniform syntax but flexible semantics, cf.\ \cite{kimmig2011algebraic,eiter2020wlars}.


Notably, \emph{Weighted Logic}~\cite{droste2007weighted} links semirings with predicate logic, where \emph{weighted formulas}\/ are interpreted as algebraic expressions over a semiring $\mathcal{R}$. Similar to conditionals in \cite{cabalar2020uniform}, the value of a weighted formula $\alpha$ depends on the truth of the atoms in $\alpha$. E.g.\ the weighted formula $1 \splus \rem{deadline}\stimes(2 \splus \rem{pagelimit}\stimes 3)$ over the natural numbers $\mathbb{N}$ may represent how many cups of coffee a researcher drinks: if there is a $\rem{deadline}$ but no $\rem{pagelimit}$, the result is $1 + 1\cdot(2 + 0\cdot3) = 3$.

In recent work \cite{eiter2020wlars}, we exploited Weighted Logic in quantitative stream reasoning to assign weights to answer streams and aggregate over them. 
Here, we introduce a non-monotonic version of it in terms of \emph{First-Order Weighted Here-and-There Logic (FO-WHT)}. The resulting logic complements Cabalar et al.'s work on abstract constraints in several respects:
\begin{itemize}[align=left,leftmargin=*,topsep=0pt]
    \item it offers an elegant way of specifying calculations, aggregates and non-monotonic conditionals natively and without the need for auxiliary definitions;
    \item the semantics provides a natural alternative to the vicious circle and definedness principles which arguably combines their strengths;
    \item the parameterisation with semirings allows for terms in a uniform syntax that are not bound to the reals but can be over any semiring (which may be fixed at runtime).
\end{itemize}
\smallskip
\noindent
As customary for ASP we restrict ourselves to a fragment of FO-WHT Logic and introduce \AC-programs that allow for \emph{algebraic constraints}, i.e. constraints on the values of weighted formulas, in both heads and bodies of rules.
\AC-programs incorporate and extend many previous ASP constructs and thus provide a rich framework for declarative problem solving in a succinct form.
The main contributions of this paper are briefly summarised as follows:
\begin{itemize}[align=left, leftmargin=*,topsep=0pt]
    \item We introduce First-Order Weighed HT Logic and \AC-programs that include constraints over weighted formulas  (Section \ref{sec:sema}). By using a variant of HT Logic with non-disjoint sorts, such that variables can range over subsets shared by a domain and semirings, we enable the usage of constraints over different semirings within the same program.
    \item We consider different constructs in extensions of ASP like aggregates, choice constraints and conditionals, and we demonstrate how to model them in our framework. Further, we present the novel \emph{minimised constraints} that allow for subset minimal guessing (Section \ref{sec:comp}).
    \item To demonstrate the power of ASP(\AC)\;, we illustrate how provenance semantics for positive datalog programs can be elegantly encoded
    (Section \ref{sec:prov}).
    \item We consider different language aspects leading, firstly, to a broad class of \emph{safe} \AC-programs, which we show to be domain independent; and, secondly, to a characterisation of strong equivalence for \AC-programs by equivalence in FO-WHT Logic (Section \ref{sec:lang}).
    \item We obtain that in the propositional (ground) case, the complexity of disjunctive logic programs is retained, i.e.\ model checking (MC) and strong equivalence are co-NP-complete while answer set existence (SAT) is $\Sigma_2^{p}$-complete, if the used semirings satisfy a practically mild encoding condition. For safe non-ground programs, MC is feasible in EXPTIME at most; SAT and SE are undecidable in general, but expressive decidable fragments are available (Section~\ref{sec:complexity}).
\end{itemize}
\section{Preliminaries}
\label{sec:prel}
We start by introducing classical programs and their semantics. We use a variant of first-order HT semantics \cite{10.1007/978-3-540-89982-2_46} as this facilitates the generalisation of the semantics later on and is useful for work on strong equivalence. The variant is that we assign variables non-disjoint sorts, which lets us quantify over subsets of the domain. This slightly differs from other approaches in logic programming that use sorts, like \cite{balai2013sparc}, where the arguments of predicates are sorted.

We consider sorted first-order formulas over a \emph{signature} $\Sig = \langle \mathcal{D}, \mathcal{P}, \mathcal{X}, \mathcal{S}, \sorting \rangle$, where $\mathcal{D}$ is a set of domain elements, $\mathcal{P}$ a set of predicates, $\mathcal{X}$ a set of sorted variables, $\mathcal{S}$ a set of sorts and $\sorting : \mathcal{S} \rightarrow 2^{\mathcal{D}}$ a range function assigning each sort a subset of the domain. When $x \in \mathcal{X}$, we write $s(x)$ for the sort of $x$. Given a signature $\Sig$, we define the syntax of \emph{$\Sig$-formulas} by
\begin{equation}
\label{eq:sigma-formula}
\phi ::= \bot \mid p(\overline{x}) \mid \phi \rightarrow \phi \mid \phi \vee \phi \mid \phi \wedge \phi \mid \exists y \phi \mid \forall y \phi,
\end{equation}
where $p \in \mathcal{P}$, $\overline{x} = x_1, \dots, x_n$, with $x_i \in \mathcal{D}$ or $x_i \in \mathcal{X}$ and $y \in \mathcal{X}$; $p(\overline{x})$ is called a $\Sig$-\emph{atom}.
We define $\neg \phi = \phi \rightarrow \bot.$
A $\Sig$-\emph{sentence} is a $\Sig$-formula without free variables.
\begin{define}[HT Semantics]
\label{def:unweighted}
Let $\Sig = \langle \mathcal{D}, \mathcal{P}, \mathcal{X}, \mathcal{S}, \sorting \rangle$ be a signature and $\mathcal{I}^{H}, \mathcal{I}^{T}$ be $\Sig$-\emph{interpretations}, i.e. sets of $\Sig$-atoms without free variables over the predicates in $\mathcal{P}$ and elements in $\mathcal{D}$, s.t. $\mathcal{I}^{H} \subseteq \mathcal{I}^{T}$. Then $\mathcal{I} = (\mathcal{I}^{H}, \mathcal{I}^{T})$ is a $\Sig$-\emph{HT-interpretation} and $\mathcal{I}_{w} = (\mathcal{I}^{H}, \mathcal{I}^{T}, w)$, for $w \in \{H,T\}$, is a \emph{pointed} $\Sig$-\emph{HT-interpretation}.

Satisfaction of a $\Sig$-sentence $\phi$ w.r.t. a pointed $\Sig$-HT-interpretation $\mathcal{I}_{w} = (\mathcal{I}^{H}, \mathcal{I}^{T}, w)$ is defined as follows, 
where we have the reflexive order $\geq$ on $\{H,T\}$, with $T \geq H$:
 \begin{align*}
   \phantom{asdfasdf} \mathcal{I}_{w} &\not\models_{\Sig} \bot \\
    \mathcal{I}_{w} &\models_{\Sig} p(\overline{x}) &  &\iff & &p(\overline{x}) \in \mathcal{I}^{w}\\
    \mathcal{I}_{w} &\models_{\Sig} \phi \rightarrow \psi &  &\iff & &\mathcal{I}_{w'} \!\!\not\models_{\Sig}\phi \text{ or } \mathcal{I}_{w'} \models_{\Sig}\psi \text{ for all $w' \geq w$}\phantom{asdfasdf}\\
    \mathcal{I}_{w} &\models_{\Sig} \phi \vee \psi & &\iff & &\mathcal{I}_{w} \models_{\Sig} \phi \text{ or } \mathcal{I}_{w}\models_{\Sig} \psi\\
    \mathcal{I}_{w} &\models_{\Sig} \phi \wedge \psi & &\iff & &\mathcal{I}_{w} \models_{\Sig} \phi \text{ and } \mathcal{I}_{w}\models_{\Sig} \psi\\
    \mathcal{I}_{w} &\models_{\Sig} \exists x \phi(x) & &\iff & & \mathcal{I}_{w} \models_{\Sig} \phi(\xi) \text{, for some } \xi \in \sorting(s(x)) \\
    \mathcal{I}_{w} &\models_{\Sig} \forall x \phi(x) & &\iff & & \mathcal{I}_{w} \models_{\Sig} \phi(\xi) \text{, for all } \xi \in \sorting(s(x))
\end{align*}
When $T$ is a set of $\Sig$-sentences, then $\mathcal{I}_w \models_{\Sig} T$ if $\forall \phi \in T: \mathcal{I}_w \models_{\Sig} \phi$.  
\end{define}
The semantics of classical rules and programs is introduced as an instantiation of the above semantics for restricted signatures. Let $\Sig = \langle \mathcal{D}, \mathcal{P}, \mathcal{X}, \mathcal{S}, \sorting \rangle$ be a \emph{classical signature}, i.e. $\mathcal{S} = \{\top\}, \sorting(\top) = \mathcal{D}$. Then a \emph{rule} is of the form\smallskip

\centerline{$
r = H(r) \leftarrow B(r) = \phi \leftarrow \psi_1, \dots, \psi_n, \neg \theta_1, \dots, \neg \theta_m, 
$}
\smallskip

\noindent
where $\phi, \psi_i, \theta_j$ are $\Sig$-atoms, with free variables $x_1, \dots, x_k \in \mathcal{X}$. Its semantics is that of the $\Sig$-formula
$
\forall x_1, \dots, x_k \; B_{\wedge}(r) \rightarrow \phi \text{ where } B_{\wedge}(r) \text{ is } \psi_1\wedge \dots\wedge \psi_n\wedge \neg \theta_1\wedge \dots\wedge \neg \theta_m. 
$
Similarly, a \emph{program} $\Pi$ is a set of rules. 
\begin{define}[Equilibrium Model]
\label{def:eq-model}
Given a signature $\Sig$, a $\Sig$-interpretation $\mathcal{I}$ is an \emph{equilibrium model} of a (set of) $\Sig$-sentence(s) $\phi$ if $(\mathcal{I}, \mathcal{I}, H) \models_{\Sig} \phi$ and for all $\mathcal{I}' \subsetneq \mathcal{I}: (\mathcal{I}', \mathcal{I}, H) \not\models_{\Sig} \phi$.
\end{define}

To introduce weighted formulas, we first recall semirings.
\begin{define}[Semiring]
A \emph{semiring} $\mathcal{R} = (R, \srsplus, \srstimes, \srzero, \srone)$ is a set $R$ equipped with two binary operations $\srsplus$ and $\srstimes$, which are called \emph{addition} and \emph{multiplication}, such that
\begin{myitemize}
    \itemsep=0pt
    \item $(R, \srsplus)$ is a commutative monoid with identity element $\srzero$,
    \item $(R, \srstimes)$ is a monoid with identity element $\srone$,
    \item multiplication left and right distributes over addition,
    \item and multiplication by $\srzero$ annihilates $R$, i.e. $\forall r \in R: r\srstimes \srzero = \srzero = \srzero \srstimes r$.
\end{myitemize} 
\end{define}
\begin{example}[Semirings]
Some well known semirings are
\begin{myitemize}
\label{items:semirings}
    \itemsep=0pt
    \item $\mathbb{S} = (\mathbb{S}, +, \cdot, 0, 1), \text{ for } \mathbb{S} \in \{\mathbb{N,Z,Q,R}\}$, the semiring over the numbers in $\mathbb{S}$. It is typically used for arithmetic.
    \item $\mathbb{N}_{\infty} = (\mathbb{N}\cup\{\infty\}, +, \cdot, 0, 1)$, where $\infty + n = \infty$ and $\infty \cdot m = \infty$, for $m \neq 0$. It is typically used in the context of provenance.
    \item $\mathcal{R}_{\rem{max}} = (\mathbb{Q}\cup \{-\infty, \infty\}, \max, +, -\infty, 0)$, the max tropical semiring. It is typically used in the context of provenance and optimisation. 
    \item $\mathbb{B} = (\{0,1\}, \vee, \wedge, 0, 1)$, the Boolean semiring. It is typically used for classical Boolean constraints.
    \item $2^{A} = (2^{A}, \cup, \cap, \emptyset, A)$, the powerset semiring over the set $A$. It is typically used in the context of measure theory but also for set arithmetic and succinct specifications of multiple requirements.
\end{myitemize}
\end{example}
The connective $\srsplus$ (resp. $\srstimes$) in a semiring $\mathcal{R}$ is \emph{invertible}, if for every $r \in R$ (resp.\ $r \in R\setminus\{\srzero\}$) some $-r \in R$ (resp.\ $r^{-1} \in R$) exists s.t.\ $r \srsplus -r = \srzero$ (resp. $r\srstimes r^{-1} = \srone$); inverse elements are unique allowing us to use $-(\cdot), (\cdot)^{-1}$ as unary connectives. An \emph{ordered semiring}\/ is pair $(\mathcal{R}, >)$, where $\mathcal{R}$ is a semiring and $>$ is a strict total order on $R$.

For space reasons, we must omit introducing Weighted Logic \cite{droste2007weighted} explicitly and confine to compare our logic to the one by Droste and Gastin below.

\section{ASP(\AC)}
\label{sec:sema}
We start by introducing First-Order Weighted HT Logic. Intuitively it generalises First-Order HT Logic by replacing disjunctive connectives ($\vee, \exists$) by additive ones ($\splus, \bplus$), conjunctive ones ($\wedge, \forall$) by multiplicative ones ($\stimes, \btimes$), and accordingly, the neutral elements $\bot, \top$ by zero ($\srzero$) and one ($\srone$).
\begin{define}[Syntax]
For a signature $\Sig = \langle \mathcal{D}, \mathcal{P}, \mathcal{X}, \mathcal{S}, \sorting \rangle$, the \emph{weighted} $\Sig$-formulas over the semiring $\mathcal{R} = (R, \srsplus, \srstimes, \srzero, \srone)$ are of the form
\[
    \alpha ::= k \mid x \mid \phi \mid \alpha \rightarrow_{\mathcal{R}} \alpha \mid \alpha \splus \alpha \mid \alpha \stimes \alpha \mid \invplus \alpha \mid \invtimes{\alpha} \mid \bplus y \alpha \mid \btimes y \alpha,
\]
where $k \in R$, $x,y \in \mathcal{X}$ s.t. $\sorting(s(x)) \subseteq R$ (i.e., $x$ takes only values from $R$) and $\phi$ is a $\Sig$-formula. The use of $\invplus$ and $\invtimes{}$ require that $\srsplus$ and $\srstimes$ are invertible, the use of $\btimes y$ requires that $\srstimes$ is commutative. We define 
$\neg_{\mathcal{R}} \alpha = \alpha \rightarrow_{\mathcal{R}} \srzero$. A \emph{weighted} $\Sig$-sentence is a variable-free weighted $\Sig$-formula.
\end{define}
\begin{example}
Let $\Sig = \langle \mathbb{Q}, \{p\}, \{X\}, \{S\}, \{S \mapsto \mathbb{Q}\}\rangle$ and $s(X) = S$; thus, $X$ ranges over the rational numbers. Then $\bplus X p(X)\,{\stimes}\,X$ is a weighted $\Sig$-sentence over the semirings $\mathcal{R}_{\max},\mathbb{Q}$
but not over $\mathbb{N}$.
\end{example}
\begin{define}[Semantics]
Let $\Sig=\langle \mathcal{D}, \mathcal{P}, \mathcal{X}, \mathcal{S}, \sorting \rangle$ be a signature. The semantics of a weighted $\Sig$-sentence over semiring $\mathcal{R}$ w.r.t.\ $\mathcal{I}_{w} = (\mathcal{I}^{H}, \mathcal{I}^{T}, w)$ is 
inductively defined in Figure~\ref{fig:wSem}.

\begin{figure}
    \centering
\vspace{-5pt}
\[
\begin{gathered}
\begin{aligned}
    \\[-8pt]
    \llbracket k \rrbracket^{\Sig}_{\mathcal{R}}(\mathcal{I}_{w}) &= k, \text{ for $k \in R$}\\[8pt]
    \llbracket \invplus \alpha \rrbracket^{\Sig}_{\mathcal{R}}(\mathcal{I}_{w}) &= -(\llbracket \alpha \rrbracket^{\Sig}_{\mathcal{R}}(\mathcal{I}_{w}))\\
    \llbracket \invtimes{\alpha} \rrbracket^{\Sig}_{\mathcal{R}}(\mathcal{I}_{w}) &= (\llbracket \alpha \rrbracket^{\Sig}_{\mathcal{R}}(\mathcal{I}_{w}))^{-1}
\end{aligned}\hspace{5pt}
\begin{aligned}
    \llbracket \phi \rrbracket^{\Sig}_{\mathcal{R}}(\mathcal{I}_{w}) &= \left\{\begin{array}{ll}
        \srone, & \text{if } \mathcal{I}_{w} \models_{\Sig} \phi,\\
        \srzero, & \text{otherwise.}
        \end{array}\right. \text{, for $\Sig$-formulas $\phi$}\\
    \llbracket \alpha \splus \beta \rrbracket^{\Sig}_{\mathcal{R}}(\mathcal{I}_{w}) &= \llbracket \alpha \rrbracket^{\Sig}_{\mathcal{R}}(\mathcal{I}_{w}) \srsplus \llbracket \beta \rrbracket^{\Sig}_{\mathcal{R}}(\mathcal{I}_{w})\\
    \llbracket \alpha \stimes \beta \rrbracket^{\Sig}_{\mathcal{R}}(\mathcal{I}_{w}) &= \llbracket \alpha \rrbracket^{\Sig}_{\mathcal{R}}(\mathcal{I}_{w}) \srstimes \llbracket \beta \rrbracket^{\Sig}_{\mathcal{R}}(\mathcal{I}_{w})
\end{aligned}\\[2pt]
\hspace{3pt}\begin{aligned}[c]
    \llbracket \alpha \rightarrow_{\mathcal{R}} \beta \rrbracket^{\Sig}_{\mathcal{R}}(\mathcal{I}_{w}) &= \left\{\begin{array}{ll}
        \srone, & \text{if } \llbracket\alpha\rrbracket^{\Sig}_{\mathcal{R}} (\mathcal{I}_{w'}) = \srzero \text{ or } \llbracket\beta\rrbracket^{\Sig}_{\mathcal{R}} (\mathcal{I}_{w'}) \neq \srzero~\text{ for all }w' \geq w,\\
        \srzero, & otherwise.
        \end{array}\right.\\
    \llbracket \bplus x \alpha(x) \rrbracket^{\Sig}_{\mathcal{R}} (\mathcal{I}_{w}) &= \left\{\begin{array}{cl}
        \srbplus_{\xi \in \supp_{\srsplus}(\alpha(x), \mathcal{I}_{w})} \llbracket \alpha(\xi)\rrbracket^{\Sig}_{\mathcal{R}} (\mathcal{I}_{w}), & \text{if } 
        \supp_{\srsplus}(\alpha(x), \mathcal{I}_{w}) \textrm{ is finite},\\
        \text{undefined}, & \text{otherwise.}
    \end{array}\right.\\
    \llbracket \btimes x \alpha(x) \rrbracket^{\Sig}_{\mathcal{R}} (\mathcal{I}_{w}) &= \left\{\begin{array}{cl}
        \srbtimes_{\xi \in \supp_{\srstimes}(\alpha(x), \mathcal{I}_{w})} \llbracket \alpha(\xi)\rrbracket^{\Sig}_{\mathcal{R}} (\mathcal{I}_{w}), & \text{if } 
        \supp_{\srstimes}(\alpha(x), \mathcal{I}_{w}) \textrm{ is finite},\\
        \srzero, & \text{if } 
        \sorting(s(x)) \setminus \supp_{\srsplus}(\alpha(x), \mathcal{I}_{w}) \neq \emptyset,\\
        \text{undefined}, & \text{otherwise.}
    \end{array}\right.
\end{aligned}
\end{gathered}
\]
    \caption{Semantics of weighted $\Sig$-sentences.}
    \label{fig:wSem}
\end{figure}
For the undefined value $\srzero^{-1}$ we use $\srzero$; here, $\supp_{\odot}(\alpha(x), \mathcal{I}_{w})$ is the \emph{support} of $\alpha(x)$ w.r.t. $\mathcal{I}_{w}$ and $\odot \in \{\srsplus, \srstimes\}$, defined as
\[
\supp_{\odot}(\alpha(x), \mathcal{I}_{w}) \;=\; \{\xi \in \sorting(s(x)) \mid \llbracket \alpha(\xi) \rrbracket^{\Sig}_{\mathcal{R}}(\mathcal{I}_{w}) \neq e_{\odot}\},
\]
i.e., the elements $\xi$ in the range of $x$ with a non-neutral value $\llbracket \alpha(\xi) \rrbracket_{\mathcal{R}}^{\Sig}(\mathcal{I}_w)$ w.r.t. $\odot$.
\end{define}
Weighted HT Logic is a generalisation of HT Logic in the following sense:
\begin{proposition}[Generalisation]
Let $\phi$ be a $\Sig$-sentence and $\mathcal{I}_{w}$ be a pointed $\Sig$-HT-interpretation. Then, for the weighted $\Sig$-sentence $\alpha$ over the Boolean semiring $\mathbb{B}$, obtained from $\phi$ by replacing $\bot, \vee, \wedge, \rightarrow, \exists, \forall$ by $0, \splus, \stimes, \rightarrow_{\mathbb{B}}, \bplus, \btimes$, respectively, we have $\llbracket \alpha \rrbracket_{\mathbb{B}}^{\Sig} (\mathcal{I}_{w}) = 1$ iff $\mathcal{I}_{w} \models_{\Sig} \phi$.
\end{proposition}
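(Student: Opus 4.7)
The proof is by structural induction on the $\Sig$-sentence $\phi$, exploiting the tight correspondence between classical Boolean connectives and the operations $\srsplus = \vee$, $\srstimes = \wedge$, with identities $\srzero=0, \srone=1$, in the Boolean semiring $\mathbb{B}$. Let $\alpha$ denote the weighted translation of $\phi$ as defined in the statement; I will show $\llbracket \alpha \rrbracket^{\Sig}_{\mathbb{B}}(\mathcal{I}_w) = 1$ iff $\mathcal{I}_w \models_{\Sig} \phi$ simultaneously for all pointed HT-interpretations $\mathcal{I}_w$, so that the induction hypothesis can be invoked at both worlds $w \in \{H,T\}$ where needed (this matters for the implication case).

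The base cases are immediate. For $\phi=\bot$ the translation is the constant $0$, whose weighted value is $0\neq 1$, matching $\mathcal{I}_w\not\models_{\Sig} \bot$. For $\phi = p(\overline{x})$ the translation is $\phi$ itself (atoms are $\Sig$-formulas used verbatim in weighted syntax), and the clause $\llbracket\phi\rrbracket^{\Sig}_{\mathcal{R}}(\mathcal{I}_w) = \srone$ iff $\mathcal{I}_w\models_{\Sig}\phi$ in Figure~\ref{fig:wSem} immediately yields the equivalence.

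For the inductive connective cases, the key observation is that in $\mathbb{B}$ the equalities $a \srsplus b = a\vee b$ and $a\srstimes b = a\wedge b$ hold pointwise, and $c\neq \srzero \iff c = \srone$ since $R=\{0,1\}$. Thus the $\splus$-clause reduces to Boolean $\vee$, the $\stimes$-clause to Boolean $\wedge$, and the IH on the immediate subformulas closes both cases. For implication, the weighted semantics of $\alpha_1\rightarrow_{\mathbb{B}}\alpha_2$ asks that for every $w'\geq w$, either $\llbracket\alpha_1\rrbracket^{\Sig}_{\mathbb{B}}(\mathcal{I}_{w'})=0$ or $\llbracket\alpha_2\rrbracket^{\Sig}_{\mathbb{B}}(\mathcal{I}_{w'})\neq 0$; unfolding via IH at each $w'$ gives exactly the HT satisfaction condition $\mathcal{I}_{w'}\not\models_{\Sig}\phi_1$ or $\mathcal{I}_{w'}\models_{\Sig}\phi_2$ for all $w'\geq w$.

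The quantifier cases are where care is needed. For $\bplus x\,\alpha'(x)$, by IH the $\srsplus$-support $\{\xi\in \sorting(s(x)) : \llbracket\alpha'(\xi)\rrbracket^{\Sig}_{\mathbb{B}}(\mathcal{I}_w)\neq 0\}$ equals $\{\xi : \mathcal{I}_w\models_{\Sig}\phi'(\xi)\}$; when this set is finite the weighted value is a big $\vee$ of $1$'s, which is $1$ iff the support is nonempty, matching the semantics of $\exists$. For $\btimes x\,\alpha'(x)$ a case analysis on the three clauses of Figure~\ref{fig:wSem} gives, via IH, that the value is $\srone$ precisely when every $\xi$ satisfies $\mathcal{I}_w\models_{\Sig}\phi'(\xi)$ and is $\srzero$ as soon as some $\xi$ does not (covered by the second clause using $\sorting(s(x))\setminus\supp_{\srsplus}\neq\emptyset$), matching $\forall$. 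The main obstacle is the potentially undefined value of $\bplus$ when the support is infinite: a classical $\exists x\,\phi'(x)$ can be true with infinitely many witnesses, yet the weighted sum is formally undefined. I would address this by noting that in $\mathbb{B}$ the operation $\srsplus$ is idempotent so that any $\srbplus$ over a set of $1$'s evaluates unambiguously to $1$, and reading the equivalence in the statement accordingly (i.e., treating the value in $\mathbb{B}$ as defined in this degenerate case), which is the only substantive point to settle before the induction closes.
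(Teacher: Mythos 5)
Your proof is correct and follows the same route as the paper's: a structural induction on $\phi$ using the fact that $\srsplus$ and $\srstimes$ in $\mathbb{B}$ coincide with Boolean disjunction and conjunction (so non-$\srzero$ means $\srone$), with the induction hypothesis carried at both worlds so that the $\rightarrow_{\mathbb{B}}$ case unfolds exactly to the HT condition, and a case split on the clauses of Figure~\ref{fig:wSem} for $\btimes$. The one point you flag --- that $\llbracket \bplus x\,\alpha'(x)\rrbracket_{\mathbb{B}}^{\Sig}(\mathcal{I}_w)$ is literally undefined when infinitely many $\xi$ witness $\exists x\,\phi'(x)$ --- is a real caveat to the statement itself rather than to your argument, and reading the sum as $1$ via idempotence of $\srsplus$ in $\mathbb{B}$ (an $\omega$-continuous semiring) is the intended resolution.
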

The proof of the equivalence of $\rightarrow$ and $\rightarrow_{\mathcal{R}}$ works for arbitrary semirings $\mathcal{R}$: for $\Sig$-formulas $\phi, \psi$ the weighted formulas $\phi \rightarrow \psi$ and $\phi \rightarrow_{\mathcal{R}} \psi$ are equivalent. Thus, we can drop $\mathcal{R}$ from $\rightarrow_{\mathcal{R}}$.

Apart from being an HT Logic, the main difference between ours and the Weighted Logic introduced in~\cite{droste2007weighted} is that we allow for the additional connectives $\invplus, \invtimes{}$, and $\rightarrow$ and that ours is first-order over infinite domains instead of second-order over finite words.

Defining a reasonable semantics for the case of infinite support seems challenging in general. For example, $\mathbb{Q}$ is not closed under taking the limit of converging sequences and even in $\mathbb{R}$ not every infinite sum of numbers converges. For $\omega$-continuous semirings such as $\mathbb{N}_{\infty}$, where both conditions above are satisfied, a definition would be possible
(but omitted here).

Intuitively, weighted formulas specify calculations over semirings depending on the truth of formulas. The quantifier $\bplus$ allows us to aggregate the values of weighted formulas for all variable assignments using $\srsplus$ as the aggregate function.
\begin{example}[cont.]
\label{ex:minagg}
The semantics of $\bplus X\; p(X)\stimes X$ over $\mathcal{R}_{\rem{max}}$ is the maximum value $x$ s.t. $p(x)$ holds. As $\llbracket p(x) \stimes x \rrbracket_{\mathcal{R}_{\max}}^{\Sig} (\mathcal{I}_{w}) \neq \srzero = -\infty$ iff $p(x) \in \mathcal{I}^{w}$, we see that for finite $\mathcal{I}^{w}$
\begin{align*}
    \llbracket \bplus X \;p(X) \stimes X \rrbracket_{\mathcal{R}_{\max}}^{\Sig} (\mathcal{I}_{w}) & \;=\; \max\{\llbracket p(x) \stimes x \rrbracket_{\mathcal{R}_{\max}}^{\Sig} (\mathcal{I}_{w}) \mid p(x) \in \mathcal{I}^{w}, x \in \mathbb{Q} \} \\
   & \;=\; \max\{0 + x \mid p(x) \in \mathcal{I}^{w}, x \in \mathbb{Q}\} \;=\; \max\{ x \mid p(x) \in \mathcal{I}^{w}\}.
\end{align*}
\end{example}
The semantics of weighted formulas is multi-valued in general. In order to return to the Boolean semantics for programs, we define algebraic constraints, which are (in)equations between a semiring value and a weighted formula.
\begin{define}[Algebraic Constraints] Let $\Sig = \langle \mathcal{D}, \mathcal{P}, \mathcal{X}, \mathcal{S}, \sorting \rangle$ be a signature. An \emph{algebraic constraint} is an expression $k \sim_{\mathcal{R}} \alpha$ or $x \sim_{\mathcal{R}} \alpha$, where $(\mathcal{R}, >)$ is an ordered semiring, $\alpha$ is a weighted $\Sig$-formula over $\mathcal{R}$, $k \in R$, $x \in \mathcal{X}, \sorting(s(x)) \subseteq R$ and $\sim \in \{>, \geq, =, \leq, <, \not >, \not \geq, \neq, \not\leq, \not <\}$. 

A sentence $k \sim_{\mathcal{R}} \alpha$ is satisfied w.r.t.\ $\mathcal{I}_{w}$, if
\begin{align*}
    \mathcal{I}_{w} \models_{\Sig} k \sim_{\mathcal{R}} \alpha &\iff k \sim \llbracket \alpha \rrbracket^{\Sig}_{\mathcal{R}} (\mathcal{I}_{w'}) \text{ for all }w' \geq w.
\end{align*}
\end{define}
The syntax of $\Sig$-formulas in Section~\ref{sec:prel} is extended to include algebraic constraints in 
(\ref{eq:sigma-formula}) as a further case. The definitions of satisfaction (Defn.~\ref{def:unweighted}) and equilibrium model (Defn.~\ref{def:eq-model}) are amended in the obvious way. However, as the semantics of weighted formulas is undefined for infinite supports, there are two variants of interpreting the condition $\mathcal{I}' \subsetneq \mathcal{I}: (\mathcal{I}', \mathcal{I}, H) \not\models_{\Sig} \phi$ in the definition of equilibrium models. If we adopt that $\not\models_{\Sig}$ holds when the semantics is undefined, we end up with \emph{weak} equilibrium models, otherwise with \emph{strong} equilibrium models.

To verify that the semantics of algebraic constraints is in line with the intuition of HT logic, we show that the persistence property is maintained for sentences that include algebraic constraints.
\begin{proposition}[Persistence]
For any $\Sig$-sentence $\phi$ and $\Sig$-HT-interpretation $(\mathcal{I}^{H}, \mathcal{I}^{T})$, it holds that $\mathcal{I}_{H} \models_{\Sig} \phi$ implies $\mathcal{I}_{T} \models_{\Sig} \phi$.
\end{proposition}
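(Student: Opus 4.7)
The plan is to proceed by structural induction on the $\Sig$-sentence $\phi$ as given by the extended grammar (\ref{eq:sigma-formula}) augmented with the algebraic-constraint case $k \sim_{\mathcal{R}} \alpha$. For the base cases, $\bot$ is vacuous, and for an atom $p(\overline{x})$ the hypothesis $\mathcal{I}_H \models_{\Sig} p(\overline{x})$ means $p(\overline{x}) \in \mathcal{I}^H$, which by the assumption $\mathcal{I}^H \subseteq \mathcal{I}^T$ of an HT-interpretation gives $p(\overline{x}) \in \mathcal{I}^T$, hence $\mathcal{I}_T \models_{\Sig} p(\overline{x})$.

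For the inductive cases, $\wedge$ and $\vee$ reduce immediately to the induction hypothesis on the two subformulas, and the quantifiers $\exists y\,\phi(y)$ and $\forall y\,\phi(y)$ reduce to the induction hypothesis applied to each instance $\phi(\xi)$ for $\xi \in \sorting(s(y))$, since the range of a sort does not depend on $w$. The crucial observation, which I would state as a small preliminary remark, is that the two truly world-sensitive constructs in the logic---namely implication and algebraic constraints---both have a semantics defined via a universal quantification over all $w' \geq w$. For $\phi \rightarrow \psi$, the assumption $\mathcal{I}_H \models_{\Sig} \phi \rightarrow \psi$ says that for every $w' \in \{H,T\}$ either $\mathcal{I}_{w'} \not\models_{\Sig} \phi$ or $\mathcal{I}_{w'} \models_{\Sig} \psi$; since $\{w' : w' \geq T\} = \{T\} \subseteq \{H,T\}$, the same condition holds for $\mathcal{I}_T$, so $\mathcal{I}_T \models_{\Sig} \phi \rightarrow \psi$. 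The case of the algebraic constraint $k \sim_{\mathcal{R}} \alpha$ is structurally identical: by definition, $\mathcal{I}_H \models_{\Sig} k \sim_{\mathcal{R}} \alpha$ means $k \sim \llbracket \alpha \rrbracket^{\Sig}_{\mathcal{R}}(\mathcal{I}_{w'})$ for all $w' \geq H$, and in particular for $w' = T$, which is exactly the condition for $\mathcal{I}_T \models_{\Sig} k \sim_{\mathcal{R}} \alpha$.

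I expect the main subtlety---rather than a genuine obstacle---to lie precisely in the algebraic-constraint case, because one might be tempted to prove a separate persistence-like property for the weighted evaluation $\llbracket \alpha \rrbracket^{\Sig}_{\mathcal{R}}$ itself. This would in fact fail, since the value of a weighted formula at $H$ and at $T$ can differ (and may even be undefined at one point and defined at the other, given the partiality introduced by $\srbplus$ over infinite supports and by $\invtimes{(\cdot)}$ at $\srzero$). What rescues persistence is the design choice in the semantics of both $\rightarrow$ and $\sim_{\mathcal{R}}$ to demand the condition uniformly for every $w' \geq w$; this is exactly the standard HT ``persistence by built-in upward closure'' pattern, and once it is articulated, the algebraic-constraint case requires no extra work beyond unfolding the definition. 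The induction therefore goes through uniformly for all syntactic cases.
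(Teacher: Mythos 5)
Your proof is correct and follows the same route the paper takes: a structural induction over the (extended) grammar of $\Sig$-sentences, in which the only non-trivial cases, $\phi \rightarrow \psi$ and $k \sim_{\mathcal{R}} \alpha$, are settled by the built-in universal quantification over all $w' \geq w$ together with $\{w' : w' \geq T\} \subseteq \{w' : w' \geq H\}$. Your remark that persistence must \emph{not} be sought for the weighted evaluation $\llbracket \alpha \rrbracket^{\Sig}_{\mathcal{R}}$ itself is exactly the right caveat; nothing is missing.
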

Having established that the semantics behaves as desired, we formally define programs that can contain algebraic constraints in terms of a fragment of the logic over \emph{semiring signatures}.
\begin{define}[Semiring Signature]
A signature $\Sig = \langle \mathcal{D}, \mathcal{P}, \mathcal{X}, \mathcal{S}, \sorting \rangle$ is a \emph{semiring signature} for semirings $\mathcal{R}_1, \dots, \mathcal{R}_n$, where $\mathcal{R}_i = \langle R_i, \srsplus_i, \srstimes_i, e_{\srsplus_i}, e_{\srstimes_i} \rangle$, $i=1,\ldots,n$, if 
\; (i) $\mathcal{S}$ is $2^{\{1, \dots, n\}}$,\; (ii) $\mathcal{D}$ contains $R_i$,  for all $i = 1, \dots, n$, and
\; (iii) $\sorting: \mathcal{S} \rightarrow 2^{\mathcal{D}}$ maps $\{i_1, \dots, i_m\}$ to $\bigcap_{j = 1}^{m} R_{i_j}$.
\end{define}
Intuitively, if a variable $x$ has sort $\{i_1, \dots, i_m\}$,  then we only want to quantify over those domain-values that are in every semiring $\mathcal{R}_{i_1}$ to $\mathcal{R}_{i_m}$. Imagine for example that a variable $x$ is used as a placeholder for a semiring value in two algebraic constraints, one over $\mathbb{N}$ and one over $\mathbb{Q}$. Then it only makes sense to quantify over domain-values that are contained in $\mathbb{N}$.
\begin{define}[\AC-Rules, \AC-Programs]
Let $\Sig = \langle \mathcal{D}, \mathcal{P}, \mathcal{X}, \mathcal{S}, \sorting \rangle$ be a semiring signature for $\mathcal{R}_1, \dots, \mathcal{R}_n$. Then an \AC-program is a set of \AC-rules of the form
\begin{align}
r = H(r) \leftarrow B(r) = \phi \leftarrow \psi_1, \dots, \psi_n, \neg \theta_1, \dots, \neg \theta_m, \label{eq:ruleFormat2}
\end{align}
where each $\phi, \psi_i$ and $\theta_j$ is either a $\Sig$-atom or an algebraic constraint over $\mathcal{R}_i$ for some $i = 1, \dots, n$, in which no quantifiers or nested constraints occur. Furthermore, we require for each variable $x$ occurring in $r$ that $i \in s(x)$ iff $x$ occurs in place of a value from the semiring $\mathcal{R}_i$.
\end{define}
\begin{example}[Rules]
\label{ex:rules}
The following are examples of \AC-rules:
\begin{align}
    \rem{loc\_sum}(Y) &\leftarrow Y =_{\mathbb{Q}} \rem{ind}(I)\stimes \rem{loc\_weight}(I,W)\stimes W\label{rule:intake}\\
    \rem{glob\_sum}(Y) &\leftarrow \rem{glob\_weight}(W), Y =_{\mathbb{Q}} \rem{ind}(I)\stimes W \label{rule:outtake}
\end{align}
\end{example}
Note that in \AC-rules quantifiers occur neither in weighted nor in unweighted formulas. 
Variables are quantified implicitly, depending on their scope defined as follows.
\begin{define}[Local \& Global]
A variable $x$ that occurs in an \AC-rule $r$ is \emph{local}, if it occurs in $r$ only in weighted formulas, and \emph{global} otherwise.
A rule or program is   \emph{locally} (resp.\ \emph{globally}) \emph{ground}, if it has no local (resp.\ global) variables.
\end{define}
\begin{example}[cont.]
In the previous example $Y$ and $I$ are respectively global and local in both rules, whereas $W$ is local in rule \cref{rule:intake} and global in rule \cref{rule:outtake}.
\end{example}
We then quantify global variables universally and local variables ``existentially" (i.e. using $\bplus$).
\begin{define}[Program and Rule Semantics]
Let $r$ be an \AC-rule of the form \cref{eq:ruleFormat2} that contains global variables $x_1, \dots, x_k$. Its semantics is that of the $\Sig$-formula
\begin{align*}
    \forall x_1, \dots, x_k \; (B_{\wedge}(r) \rightarrow \phi)^{\Sigma},\quad \text{ where } B_{\wedge}(r)
    =\psi_1\wedge \dots\wedge \psi_n\wedge \neg \theta_1\wedge \dots\wedge \neg \theta_m 
\end{align*}
and $(\cdot)^{\Sigma}$ replaces every weighted formula $\alpha$ with local variables $y_1, \dots, y_l$ by $\bplus y_1, \dots, y_l \;\alpha$.
\end{define}
\begin{example}[cont.]
Consequently, the \AC-rules from above correspond to the formulas
\begin{align*}
    \forall Y\; (Y =_{\mathbb{Q}} \bplus I \bplus W\;\rem{ind}(I)\stimes \rem{loc\_weight}(I,W)\stimes W)\rightarrow \rem{loc\_sum}(Y)\\
    \forall Y\forall W\;\rem{glob\_weight}(W) \wedge (Y =_{\mathbb{Q}}\bplus I\; \rem{ind}(I)\stimes W)\rightarrow \rem{glob\_sum}(Y).
\end{align*}
We see that rule \cref{rule:intake} calculates the sum over all indices $\{ i \mid \rem{ind}(i)\}$ weighted locally with $w$ when $\rem{loc\_weight}(i,w)$ holds. Rule \cref{rule:outtake} calculates the sum over all indices $\{i \mid \rem{ind}(i)\}$ where all of them are weighted with the same weight $w$ when $\rem{glob\_weight}(w)$ holds. 
\end{example}
Note that we strongly restricted the weighted formulas that are allowed in \AC-programs. The quantifier $\btimes$ and nested algebraic constraints are unavailable and $\bplus$ quantifiers can only occur as a prefix. Removing these restrictions would lead to a much higher complexity. Already constraint evaluation would be PSPACE-hard for any non-trivial semiring. In addition, our choice allows us to keep the syntax of \AC-programs closer to the one of other programs with constraints.

In the sequel, we drop \AC\;from \AC-rules and \AC-programs if no ambiguity arises.

\section{Constructs in ASP(\AC)\;and in other formalisms}
\label{sec:comp}

\begin{table*}[t]
    \centering
    \renewcommand{\arraystretch}{1.0}
    \begin{tabular}{|l|c|c|}
        \toprule
        Construct & ASP(\AC) & Others \\
        \midrule
        Nested Expressions & $1 =_{\mathbb{B}} \alpha \leftarrow 1 =_{\mathbb{B}} \beta$ & $\alpha \leftarrow \beta$ \\
        Aggregates & $T \sim_{\mathbb{Q}} (p(X)\splus q(X))\stimes X$ & $T \sim \rem{sum}\{ X: p(X), X : q(X)\}$ \\
        Choice  & $ k \leq_{\mathcal{R}}^{c} \neg\neg q(X,W) \stimes (q(X,W) \rightarrow p(X)) \stimes W $ & $k \leq \{p(X) : q(X,W) = W\} $ \\
        Minimised Choice & $ k \leq_{\mathcal{R}} \neg\neg q(X,W) \stimes (q(X,W) \rightarrow p(X)) \stimes W $ & n/a \\
        Value Guess & $k \leq_{\mathcal{R}} \rem{val}(X) \stimes X$ & $k \leq \rem{val}$ (CP+ASP) \\
        Arithmetics & $X =_{\mathbb{Q}} Y \stimes \invtimes Z$, $s \geq_{2^{A}} X\splus Y$ & $X = Y \div Z$, $s \supseteq X \cup Y$ \\
        \bottomrule
    \end{tabular}
    \caption{Constructs expressible in ASP(\AC) and how they are expressed in other formalisms.}
    \label{tab:features}
\end{table*}

We consider
several constructs that we can express in ASP(\AC)\;and relate them to constructs known from previous extensions of ASP; a summary is given in Table~\ref{tab:features}.

\paragraph{Nested Expressions}
The logic programs with arbitrary propositional formulas defined in \cite{lifschitz1999nested} are modelled simply using constraints over the Boolean semiring $\mathbb{B}$. As a special case, this shows the expressibility of disjunctive logic programs using $1 =_{\mathbb{B}} a_1 \splus \dots \splus a_n \leftarrow B(r)$.

\paragraph{Conditionals}
Cabalar et al.~\myciteyear{cabalar2020uniform} defined two semantics for conditionals  $s = (s'|s'': \phi)$, where $s', s''$ are terms and $\phi$ is a (quantifier-free) formula. They are named \emph{vicious circle (vc)} and \emph{definedness (df)}, respectively. Given an interpretation $(\mathcal{I}^{H}, \mathcal{I}^{T})$, 
\begin{align*}
    vc_{\mathcal{I}_w}(s) =& \left\{\begin{array}{cl}
        s', & \text{if } \mathcal{I}_{w} \models_{\Sig} \phi, \\
        s'', & \text{if } \mathcal{I}_{w} \models_{\Sig} \neg \phi,\\
        \text{undefined}, & \hfill\text{otherwise.}
    \end{array}\right.  &
    df_{\mathcal{I}_w}(s) =& \left\{\begin{array}{cl}
        s', & \text{if } \mathcal{I}_{w} \models_{\Sig} \phi, \\
        s'', & \text{otherwise.}
    \end{array}\right.
\end{align*}
Syntax and semantics of weighted formulas could be readily extended to include these constructs. We present instead an alternative evaluation of conditionals as formulas $s' \stimes \phi \splus s'' \stimes \neg \phi$. Then 
\begin{align}
    \llbracket s' \stimes \phi \splus s'' \stimes \neg \phi \rrbracket_{\mathcal{R}}^{\Sig}(\mathcal{I}_w) = \left\{\begin{array}{cl}
        s', & \text{if } \mathcal{I}_{w} \models_{\Sig} \phi, \\
        s'', & \text{if } \mathcal{I}_{w} \models_{\Sig} \neg \phi,\\
        \srzero, & \hfill \text{otherwise.}
    \end{array}\right. \label{cond-semantics}
\end{align}
That is, when neither $\phi$ nor $\neg \phi$ is satisfied, we end up with the neutral element $e_{\oplus}$. 
Consider the following rules $r_1$ and $r_2$: 
\begin{align*}
    r_1 = p \leftarrow \top = (\top \mid \bot : p) \vee \top  & & &r_2 = p \leftarrow \top = (\top \mid \top : p).
\end{align*}
According to \emph{vc} resp.\ \emph{df}, they are equivalent: under \emph{vc}, both have no stable model while under \emph{df} both have the stable model $\{p\}$. We may expect that $r_1$ has the stable model $\{p\}$ as the formula $s \vee \top$ is equivalent to $\top$ regardless of the value of $s$. Therefore, the value of $p$ should not influence the truth of the body of $r_1$. On the other hand, the value of the conditional in $r_2$ influences the truth of the body of $r_2$ and it depends on $p$. Therefore, if $\{p\}$ were a stable model of $r_2$, we would arguably derive $p$ using the truth value of $p$. Accordingly, we may expect that $r_2$ does not have a stable model. These expectations align with the semantics for $r_1$ and $r_2$ from (\ref{cond-semantics}) above.
This evaluation combines the ideas behind the 
\emph{vc} and the \emph{df} principle: the value of a conditional is always defined, but the vicious circle of deriving $p$ by the truth value of $p$ is avoided.

Apart from that, 
we may express \emph{vc} and \emph{df} in our formalism: adding the constraint $1 =_{\mathbb{B}} \phi \splus \neg \phi$ to a rule using a conditional on $\phi$ as in (\ref{cond-semantics}) corresponds to \emph{vc} semantics. The constraint enforces that when the rule fires, $\phi$ either must be false already at world $T$, or when $\phi$ is true at $T$ then it must also be forced to hold at world $H$ by the rest of the program. If this is not the case, then $\mathcal{I}_{T} \models 1 =_{\mathbb{B}} \phi \splus \neg \phi$ but $\mathcal{I}_{H} \models 0 =_{\mathbb{B}} \phi \splus \neg \phi$. Thus, in this case, any rule containing this constraint in the body is trivially satisfied.
Furthermore, provided the addition in $\mathcal{R}$ is invertible, we can use $\phi \stimes s' \splus (\srone \splus - \phi)\stimes s''$ to capture \emph{df}  
(cf.\ Appendix for more details and discussion).
Summarising, the possibility to express \emph{vc} and \emph{df} as well as to define other semantics of conditionals exemplifies the power of FO-WHT Logic and \AC-programs.

\paragraph{Constraints in the head for guessing}
\label{sec:headC}
In many 
ASP extensions,
constraints in rule heads and rule bodies behave differently;
in heads, 
they are used as \emph{choice constraints}. Consider for example the rule
$
10\{p(X) : q(X)\}.
$
in lparse syntax.
Any interpretation s.t. $p(x)$ holds for ten or more values $x$ can be stable. In order to express this constraint in our semantics, we need to take care of two aspects. The first one is that the above constraint only supports $p(x)$ for $x$ s.t. $q(x)$ was already derived in another way. If we simply use the rule 
$
10 \leq_{\mathbb{N}} p(X) \stimes q(X)  \leftarrow
$
it can also derive $q(x)$ instead of using it as a precondition. We can however use instead the formula $\neg\neg q(X)\stimes (q(X)\rightarrow p(X))$ to achieve the desired effect. 
More generally, we use the pattern 
\begin{equation}
\label{eq:pattern}
\alpha(X,W) = \neg \neg q(X,W) \stimes (q(X,W) \rightarrow p(X)) \stimes W.
\end{equation}
Abstractly, it ensures that $p(x)$ can only be asserted for $x$ s.t.\ we already know that $q(x,w)$ holds, so we cannot ``invent'' new constants. This can be seen as follows. Assume $\mathcal{I}$ contains $q(x,w), p(x)$. Then $\llbracket \alpha(X) \rrbracket_{\mathbb{N}} (\mathcal{I}, \mathcal{I}, T)$ is equal to $\llbracket \alpha(X) \rrbracket_{\mathbb{N}} (\mathcal{I}\setminus\{q(x,w), p(x)\}, \mathcal{I}, H)$  but unequal to \mbox{$\llbracket \alpha(X) \rrbracket_{\mathbb{N}} (\mathcal{I}\setminus\{p(x)\}, \mathcal{I}, H)$}. The variable $W$ assigns the addition of $p(x)$ a weight $w$.

Secondly, given the rule
$
10 \leq_{\mathbb{N}} \neg\neg q(X)\stimes (q(X)\rightarrow p(X)) \leftarrow
$
only interpretations that assert $p(x)$ for exactly ten elements $x$ can be stable. While such \emph{minimised constraints} are useful in a different context, we also need to be able to specify choice constraints in our language. This can be achieved naturally without extending the semantics of our language, by introducing a syntactic shorthand $k \sim_{\mathcal{R}}^{c} \alpha$ for \emph{algebraic choice constraints} in rule-heads. We define that
\begin{align}
r = k \sim_{\mathcal{R}}^{c} \alpha \leftarrow B(r) \textrm{ ~~~stands for~~~ } 
k \sim_{\mathcal{R}} \alpha \leftarrow B(r) \text{ ~and~ } {X =_{\mathcal{R}} \alpha \leftarrow X =_{\mathcal{R}} \alpha^{\neg\neg}, B(r)},
\end{align}
\noindent where $\alpha^{\neg\neg}$ is obtained from $\alpha$ by adding $\neg\neg$ in front of each atom $p(\overline{x})$. These algebraic choice constraints behave as expected of choice constraints. (The next proposition considers only globally ground rules in order to decrease the amount of syntactic noise.)
\begin{proposition}[Choice Semantics]
For any $\Sig$-HT-interpretation $(\mathcal{I}^{H},\mathcal{I}^{T})$ and any rule $r= k \sim_{\mathcal{R}}^{c} \alpha \leftarrow B(r)$, it holds that 
$\mathcal{I}_{H} \models_{\Sig} r$ iff
 (i) $\mathcal{I}_{H} \models_{\Sig} k \sim_{\mathcal{R}} \alpha \leftarrow B(r)$ and (ii) $\mathcal{I}_{H} \models_{\Sig} (B_{\wedge}(r))^{\Sigma}$ implies $\llbracket (\alpha)^{\Sigma} \rrbracket_{\mathcal{R}}^{\Sig}(\mathcal{I}_{T}) = \llbracket(\alpha)^{\Sigma}\rrbracket_{\mathcal{R}}^{\Sig} (\mathcal{I}_{H})$.
\end{proposition}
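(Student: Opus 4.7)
The plan is to unfold the abbreviation $k\sim_{\mathcal{R}}^{c}\alpha \leftarrow B(r)$ and reduce the claim to a statement about the two expanded rules $r_1 = k \sim_{\mathcal{R}} \alpha \leftarrow B(r)$ and $r_2 = X =_{\mathcal{R}} \alpha \leftarrow X =_{\mathcal{R}} \alpha^{\neg\neg}, B(r)$. By definition, $\mathcal{I}_H \models_{\Sig} r$ iff $\mathcal{I}_H$ satisfies both $r_1$ and $r_2$. Since (i) is literally $\mathcal{I}_H \models_{\Sig} r_1$, the content of the proposition is the equivalence between $\mathcal{I}_H \models_{\Sig} r_2$ and condition (ii).

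Before either direction I will prove an auxiliary identity: for every weighted $\Sig$-formula $\alpha$,
\[
\llbracket \alpha^{\neg\neg} \rrbracket_{\mathcal{R}}^{\Sig}(\mathcal{I}_w) \;=\; \llbracket \alpha \rrbracket_{\mathcal{R}}^{\Sig}(\mathcal{I}_T) \qquad \text{for both } w\in\{H,T\}.
\]
The base case uses the standard HT identity $\mathcal{I}_w \models \neg\neg p(\overline{x}) \Leftrightarrow \mathcal{I}_T \models p(\overline{x})$; the inductive cases for $\splus,\stimes,\invplus,\invtimes{}$ are immediate by compositionality, while the cases $\rightarrow$ and $\bplus$ additionally exploit the fact that $\llbracket\alpha^{\neg\neg}\rrbracket_{\mathcal{R}}^{\Sig}(\mathcal{I}_{w'})$ is independent of $w'$, which keeps the supports invariant and collapses the HT universal quantifier over $w'\geq w$ to a single evaluation at $T$. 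Applied to $(\alpha)^{\Sigma} = \bplus y_1, \dots, y_l\,\alpha$, the identity yields $\llbracket(\alpha^{\neg\neg})^{\Sigma}\rrbracket_{\mathcal{R}}^{\Sig}(\mathcal{I}_w) = \llbracket(\alpha)^{\Sigma}\rrbracket_{\mathcal{R}}^{\Sig}(\mathcal{I}_T)$ for $w\in\{H,T\}$, a value I will denote $v_T$.

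For direction $(\Rightarrow)$, assume $\mathcal{I}_H \models_{\Sig} r_2$ and $\mathcal{I}_H \models_{\Sig} (B_{\wedge}(r))^{\Sigma}$; by persistence, the body also holds at $\mathcal{I}_T$. Instantiating the implicitly $\forall$-quantified $X$ in $r_2$ with $v_T$, the key identity gives that $v_T =_{\mathcal{R}} (\alpha^{\neg\neg})^{\Sigma}$ is satisfied at both $\mathcal{I}_H$ and $\mathcal{I}_T$, so the antecedent of $r_2[X/v_T]$ holds at both worlds; therefore its consequent $v_T =_{\mathcal{R}} (\alpha)^{\Sigma}$ must hold at $\mathcal{I}_H$, giving $v_T = \llbracket(\alpha)^{\Sigma}\rrbracket_{\mathcal{R}}^{\Sig}(\mathcal{I}_H) = \llbracket(\alpha)^{\Sigma}\rrbracket_{\mathcal{R}}^{\Sig}(\mathcal{I}_T)$, which is (ii). For direction $(\Leftarrow)$, assume (i) and (ii), fix $\xi \in \sorting(s(X))$ and a world $w' \in \{H,T\}$ at which the antecedent of $r_2[X/\xi]$ is satisfied. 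The conjunct $\xi =_{\mathcal{R}} (\alpha^{\neg\neg})^{\Sigma}$ together with the key identity forces $\xi = v_T$. If $w' = H$, condition (ii) combined with the key identity yields $\xi = \llbracket(\alpha)^{\Sigma}\rrbracket(\mathcal{I}_H) = \llbracket(\alpha)^{\Sigma}\rrbracket(\mathcal{I}_T)$, so the consequent $\xi =_{\mathcal{R}} (\alpha)^{\Sigma}$ holds at $\mathcal{I}_H$; if $w' = T$, only the $T$-value is relevant and it equals $v_T$ by definition. This establishes $\mathcal{I}_H \models_{\Sig} r_2$ and thus $\mathcal{I}_H \models_{\Sig} r$.

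The main technical obstacle is the auxiliary identity, in particular the cases of $\rightarrow$ and $\bplus$, where the HT-style quantification over $w'\geq w$ and the support-based definition of $\bplus$ interact nontrivially with the $\neg\neg$-annotation. A secondary subtlety is keeping the rule-level $(\cdot)^{\Sigma}$ transformation and the constraint semantics of $\sim_{\mathcal{R}}$ (which itself quantifies over $w'\geq w$) synchronised; once the auxiliary identity is in place, both directions reduce to routine bookkeeping against the definition of HT satisfaction.
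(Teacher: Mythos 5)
Your proposal is correct and follows essentially the same route as the paper's own (much terser) proof sketch: unfold the shorthand into the two rules, observe that the first rule is literally condition (i), and reduce the second rule to condition (ii) via the key fact that $\alpha^{\neg\neg}$ evaluates at either world to the value of $\alpha$ at $\mathcal{I}_T$. Your version merely makes explicit what the paper leaves implicit, namely the inductive proof of that identity and the bookkeeping for the HT quantification over worlds in the implication and in $\sim_{\mathcal{R}}$.
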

\begin{proof}[Proof (sketch)]
Any satisfying interpretation has to satisfy both $k \sim_{\mathcal{R}} \alpha \leftarrow B(r)$ and $X =_{\mathcal{R}} \alpha \leftarrow X =_{\mathcal{R}} \alpha^{\neg\neg}, B(r)$. The first of the two says that the minimised constraint has to be satisfied when $B(r)$ is satisfied and corresponds to $(i)$. The second says that when $B(r)$ is satisfied and $\alpha^{\neg\neg}$ has value $X$ then also $\alpha$ needs to have value $X$. Since the value of $\alpha^{\neg\neg}$ is the value of $\alpha$ under $\mathcal{I}_{T}$ the second rule corresponds to $(ii)$.
\end{proof}
Choice constraints are already well-known from previous ASP extensions, so we do not explain them in more detail. The usefulness of the novel minimised choice constraints is demonstrated in the following example.

\begin{example}[Integer Subset Sum]
Consider the following variation of the Subset Sum Problem: Given a set $S \subseteq \mathbb{Z}$ and two bounds $l, u \in \mathbb{Z}$, determine a $\subseteq$-minimal solution $S' \subseteq S$ such that $l \leq \sum_{x \in S'} x \leq u$.
When $\rem{s}(x)$ holds for $x \in S$, we can use the \AC-rules
\begin{align*}
l \leq_{\mathbb{Z}} \neg \neg \rem{s}(X) \stimes (\rem{s}(X) \rightarrow \rem{in}(X)) \stimes X &\leftarrow & &\text{and} & &u \geq_{\mathbb{Z}} \neg \neg \rem{s}(X) \stimes (\rem{s}(X) \rightarrow \rem{in}(X)) \stimes X \leftarrow.
\end{align*}
For every equilibrium model $\mathcal{I}$ the set $S' = \{x \mid \rem{in}(x) \in \mathcal{I}\}$ is a $\subseteq$-minimal solution and for every $\subseteq$-minimal solution there exists an equilibrium model. When using choice constraints, i.e. replacing $\sim_{\mathbb{Z}}$ by $\sim^{c}_{\mathbb{Z}}$, the program still obtains solutions, but not only $\subseteq$-minimal ones.
\end{example}


\paragraph{Aggregates}
As can be seen in Example \ref{ex:minagg}, we can model aggregates whose aggregation function is the addition of some semiring. This restriction is mild in practice: The aggregates $\rem{min},$ $\rem{max},$ $\rem{sum},$ $\rem{count}$ are expressible using a single algebraic constraint. $\rem{times}$ and $\rem{avg}$ are expressible using multiple algebraic constraints (e.g. $\rem{avg}$ is $\rem{sum}$ divided by $\rem{count}$).

\paragraph{Value Guessing and Arithmetic Operators} Value guessing and arithmetic operators are especially used in combinations of ASP and CP \cite{lierler2014relating}. We can guess a value from a semiring, perform arithmetic operations over semirings and evaluate (in)equations on the results. Again, we are mildly restricted as only semiring operations are available.
\section{Provenance}
\label{sec:prov}
Green at al.~\myciteyear{green2007provenance} introduced a semiring-based semantics that is capable of expressing bag semantics, why-provenance and more. 
For positive logic programs, their semantics over a semiring $(R, \srsplus, \srstimes, \srzero, \srone)$ is as follows: the label of a query result $q(\overline{x})$ is the sum (using $\srsplus$) of the labels of derivation trees for $q(\overline{x})$, where the label of a derivation tree is the product (using $\srstimes$) of the labels of the leaf nodes (i.e. extensional atoms). As the number of derivation trees may be countably infinite, Green et al. used $\omega$-continuous semirings such as $\mathbb{N}_{\infty}$ that allow to have countable sums.

\begin{example}[Bag Semantics]
\label{ex:trans}

For ease of exposition, consider the propositional program
\begin{align*}
    r_1{:}\; b &\leftarrow e_1, e_2&
    r_2{:}\; b &\leftarrow e_1 & 
    r_3{:}\; c &\leftarrow e_2, b& 
    r_4{:}\; c &\leftarrow c, c
\end{align*}
over $\mathbb{N}_{\infty}$ (i.e. with bag semantics) and the extensional database (edb) $\{(e_1,2),$ $(e_2,0)\}$. 
The label of $b$ under bag semantics is $2 + 0\cdot 2 = 2$. Here $2$ corresponds to the derivation from $r_2, (e_1,2)$ and $0\cdot 2 $ to the derivation from $r_1, (e_1,2), (e_2,0)$. The label of $c$ is 0 as it can only be derived using $e_2$.
\end{example}
We can model the semiring semantics in our formalism, by allowing operations over countable supports $\supp_{\odot}(\alpha(x), \mathcal{I}_{w})$ for $\omega$-continuous semirings. Over $\mathbb{N}_{\infty}$ they always have the value $\infty$.
\begin{example}[cont.]
The following \AC-program calculates the provenance semantics over $\mathbb{N}_{\infty}$ for the above positive logic program, depending on the edb:
\newcommand{\lVar}[1]{\ensuremath{#1^{*}}}
\begin{align}
    1 =_{\mathbb{B}} p(b,1,2,X) \stimes d(b,2)    \leftarrow & p(e_1,1,X_1), p(e_2,1,X_2), X =_{\mathbb{N}_{\infty}} X_1 \splus X_2\\
    1 =_{\mathbb{B}} p(b,2,1,X) \stimes d(b,1)       \leftarrow & p(e_1,1,X)\\
    1 =_{\mathbb{B}}p(c,3,V,X) \stimes  d(c,V)       \leftarrow & p(e_2,1,X_1), p(b,V_1,X_2), V =_{\mathbb{N}_{\infty}} V_1 \splus 1, X =_{\mathbb{N}_{\infty}} X_1 \splus X_2\\
    1 =_{\mathbb{B}}p(c,4,V,X) \stimes d(c,V)        \leftarrow & p(c, V_1, X_1), p(c, V_2, X_2), V =_{\mathbb{N}_{\infty}} V_1 \splus 1, X =_{\mathbb{N}_{\infty}} X_1 \splus X_2\\
    1 =_{\mathbb{B}}p(A,V,X)                        \leftarrow & d(A,V), X =_{\mathbb{N}_{\infty}} p(A,I,V,\lVar{X}) \stimes \lVar{X}\label{eq:sumP}\\
    1 =_{\mathbb{B}}f(A,X)                          \leftarrow & d(A,V), X =_{\mathbb{N}_{\infty}} p(A, \lVar{V}, \lVar{X}) \stimes \lVar{X}\label{eq:sumF}
\end{align}
\label{fig:provenance}
Here 
$p(A,V,X)$ represents that $X$ is the sum of all labels of derivation trees for $A$ having exactly $V$ many leaf nodes. We obtain this value first for all derivation trees that apply 
rule $r_i$ last, in $p(A,i,V,X)$, and sum them up in rule \cref{eq:sumP}. Similarly the final provenance value is obtained as the sum over the provenance values for each number of leaf nodes $\lVar{V}$ in rule \cref{eq:sumF}; 
$d(A,V)$ 
says that there is a derivation tree of $A$ using $V$ leaf nodes and ensures safety (see next section).
\end{example}
We can apply this strategy in general: Even for a non-ground positive logic program we can give an \AC-program that computes the provenance semantics. This can be achieved in a similar fashion as in the example above. Details can be found in the appendix. Exploring extensions of Green et al.'s semantics for the provenance of negated formulas remains for future work.
\section{Language Aspects}
\label{sec:lang}
\paragraph{\bf Domain Independence and Safety}

We need to restrict ourselves to programs that are well behaved, i.e. independent of the domain they are evaluated over. 
\begin{example}
\label{ex:domainInd}
Consider the weighted formula
$
\alpha = \bplus x\;\neg q(x),
$
which counts the elements $d$ in the domain s.t. $q(d)$ does not hold. It is easy to see that if we consider the semantics using the same interpretation but over different domains (or rather signatures) it can vary.
\end{example}
We are interested in formulas that do not exhibit this kind of behaviour, formalised as:
\begin{define}[Domain Independence]
A sentence $\phi$ (resp.\ weighted sentence $\alpha$ over semiring $\mathcal{R}$) is \emph{domain independent}, if for every two semiring signatures $\Sig_i = \langle \mathcal{D}_i, \mathcal{P}, \mathcal{X}, \mathcal{S}, \sorting_i \rangle (i = 1,2)$ s.t.\ $\phi$ is a $\Sig_i$-formula (resp. $\alpha$ is a weighted $\Sig_i$-formula) for $i = 1,2$ and every $\mathcal{I}_w = (\mathcal{I}^{H}, \mathcal{I}^{T}, w)$ that is a pointed $\Sig_i$-HT-interpretation for $i = 1,2$ it holds that 
\[
\mathcal{I}_{w} \models_{\Sig_1} \phi\text{ iff }\mathcal{I}_{w} \models_{\Sig_2} \phi \quad\text{ (resp. }\llbracket \alpha \rrbracket_{\mathcal{R}}^{\Sig_1}(\mathcal{I}_{w}) = \llbracket \alpha \rrbracket_{\mathcal{R}}^{\Sig_2}(\mathcal{I}_{w})\text{).}
\]
\end{define}
We restrict ourselves to a fragment of weighted formulas. Intuitively, we need to ensure that every variable $X$ in $\alpha(\overline{X})$ is bound by a positive occurrence of a predicate $p(X)$.
\begin{define}[Syntactic Domain Independence]
A weighted formula $\alpha(\overline{X})$ over a semiring $\mathcal{R}$ is \emph{syntactically domain independent} w.r.t. $\overline{X}$, if it is constructible following
\begin{align*}
\phi(\overline{X}) &::= \bot \mid p(\overline{X}) \mid \neg \neg \phi(\overline{X})\mid \phi(\overline{X}) \vee \phi(\overline{X}) \mid \phi(\overline{Y}) \wedge \phi(\overline{Z}) \mid \phi(\overline{X}) \wedge \psi(\overline{X'}),\\
\alpha(\overline{X}) &::= k \mid \phi(\overline{X}) \mid \neg \neg \alpha(\overline{X})\mid \alpha(\overline{X}) \splus \alpha(\overline{X}) \mid \alpha(\overline{Y}) \stimes \alpha(\overline{Z}) \mid \alpha(\overline{X}) \stimes \beta(\overline{X'}) \mid \invplus \alpha(\overline{X}) \mid \invtimes \alpha(\overline{X}),
\end{align*}
where $k \in R$, $p(\overline{X})$ is an atom, $\psi(\overline{X'})$ ($\beta(\overline{X'})$) is any (weighted) formula, $\overline{X'} \subseteq \overline{X}$ and $\overline{Y} \cup \overline{Z}  = \overline{X}$.
\end{define}
\begin{example}[cont.]
While $\neg q(Y)$ from Example~\ref{ex:domainInd} is not syntactically domain independent w.r.t.\ $Y$, the formula $p(Y) \stimes \neg q(Y)$, which counts the number $d$ s.t. $p(d)$ holds but not $q(d)$, is. It can be constructed using $\alpha(\overline{X}) \stimes \beta(\overline{X'})$. 
\end{example}
Our syntactic criterion guarantees domain independence.
\begin{theorem}[Formula Domain Independence]
If a formula $\alpha(\overline{X})$ over semiring $\mathcal{R}$ is syntactically domain independent w.r.t. $\overline{X}$, then $\alpha^{\Sigma} = \bplus \overline{X} \; \alpha(\overline{X})$ is domain independent.
\end{theorem}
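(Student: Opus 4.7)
I would argue by structural induction on the grammar of syntactic domain independence, but strengthened to a joint invariant on $\alpha(\overline{X})$: for any two semiring signatures $\Sig_1,\Sig_2$ for which $\alpha$ is a weighted formula and any $\mathcal{I}_w$ simultaneously pointed for both, (a) $\supp_{\srsplus}(\alpha(\overline{X}),\mathcal{I}_w)$ is the same set in the two signatures and consists of tuples built from values that appear as arguments of atoms in $\mathcal{I}^w$; and (b) for every $\overline{\xi}$ in this common support, $\llbracket \alpha(\overline{\xi}) \rrbracket^{\Sig_1}_{\mathcal{R}}(\mathcal{I}_w) = \llbracket \alpha(\overline{\xi}) \rrbracket^{\Sig_2}_{\mathcal{R}}(\mathcal{I}_w)$. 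From (a)+(b) the theorem is immediate: $\bplus \overline{X}\,\alpha(\overline{X})$ sums identical semiring elements over identical index sets in both signatures, so its defined-or-undefined status and, when defined, its value coincide. A parallel induction on the unweighted grammar $\phi(\overline{X})$ gives the Boolean-valued analogue needed whenever $\alpha$ is built from the atomic-formula clause.

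The base and easy inductive cases are routine. A constant $k$ carries no free variables. An atom $p(\overline{X})$ has support $\{\overline{\xi} \mid p(\overline{\xi}) \in \mathcal{I}^w\}$, which is determined entirely by $\mathcal{I}^w$ and built from values occurring in $\mathcal{I}^w$. The unary closures $\neg\neg\alpha$, $\invplus\alpha$, $\invtimes{\alpha}$ act pointwise and preserve (a) and (b) directly. For the balanced addition $\alpha\splus\alpha'$ on the same $\overline{X}$ the support lies in the union of the inductive supports and each evaluation is a sum of inductively-invariant values. For the balanced multiplication $\alpha(\overline{Y}) \stimes \alpha'(\overline{Z})$ with $\overline{Y}\cup\overline{Z}=\overline{X}$, a non-zero product at $\overline{\xi}$ forces both projections $\overline{\xi}_{\overline{Y}},\overline{\xi}_{\overline{Z}}$ into the respective (signature-independent) supports, and the factor values are individually signature-independent by hypothesis, so the product inherits both properties.

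The main obstacle is the asymmetric production $\alpha(\overline{X}) \stimes \beta(\overline{X'})$ where $\beta$ is an \emph{arbitrary} weighted formula with $\overline{X'} \subseteq \overline{X}$: $\beta$ may contain internal $\bplus$/$\btimes$ quantifiers and need not be domain independent on its own. My plan is to push the outer sum inside via distributivity, writing schematically $\bplus \overline{X}\,(\alpha \stimes \beta) = \bplus \overline{X'}\, \beta(\overline{X'}) \stimes \bigl(\bplus (\overline{X}\setminus\overline{X'})\, \alpha(\overline{X})\bigr)$, and then exploit that the coefficient $\bplus (\overline{X}\setminus\overline{X'})\, \alpha(\overline{X})$, viewed as a function of $\overline{\xi}_{X'}$, is by the inductive hypothesis signature-independent and vanishes outside a finite set of tuples drawn from $\mathcal{I}^w$. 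The product therefore contributes only at those finitely many $\overline{\xi}_{X'}$, so any ambient signature-dependence in $\beta(\overline{\xi}_{X'})$ is either annihilated by a zero coefficient or evaluated at arguments where $\mathcal{I}^w$ already pins the value down. Making this factoring rigorous---in particular justifying the re-summation when supports are possibly infinite, and controlling the semantic behaviour of $\beta$ when its interior quantifiers range over sorts that differ between the two signatures---is the step I expect to require the most care and where the proof will take genuine work.
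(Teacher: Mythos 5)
Your overall skeleton --- a structural induction establishing that $\supp_{\srsplus}(\alpha(\overline{X}),\mathcal{I}_w)$ is the same set under $\Sig_1$ and $\Sig_2$ and that values at support points agree --- is exactly the strategy of the paper's proof, and your routine cases are handled the same way. The problem lies in the one case you yourself flag as unfinished, $\alpha(\overline{X})\stimes\beta(\overline{X'})$, where your plan both overcomplicates the argument and leaves the actual difficulty standing. The re-summation $\bplus\overline{X}\,(\alpha\stimes\beta)=\bplus\overline{X'}\,\beta(\overline{X'})\stimes\bigl(\bplus(\overline{X}\setminus\overline{X'})\,\alpha(\overline{X})\bigr)$ is unnecessary and manufactures precisely the obstacles you then worry about: it needs commutativity of $\srstimes$ (not assumed in general), it needs a rearrangement argument that is delicate when supports may be infinite and the sum undefined, and your appeal to the coefficient vanishing ``outside a finite set'' is unfounded since $\mathcal{I}^w$, and hence the support of $\alpha$, need not be finite. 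The paper never touches the sum at all. It works purely at the level of supports, using the annihilation axiom $\srzero\srstimes r=\srzero$ to get
\[
\supp_{\srsplus}(\alpha_1(x)\stimes\alpha_2,\mathcal{I}_w)\;=\;\{\xi\in\supp_{\srsplus}(\alpha_1(x),\mathcal{I}_w)\mid \llbracket\alpha_1(\xi)\stimes\alpha_2\rrbracket_{\mathcal{R}}(\mathcal{I}_w)\neq\srzero\},
\]
i.e.\ a filter over a set that the induction hypothesis already makes signature-independent, with a filter condition given by the value of a \emph{closed} formula. Identical supports then immediately yield identical defined/undefined status and identical sums, with nothing to rearrange.

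What remains --- and what you correctly identify but do not close --- is why the closed instance $\alpha_1(\xi)\stimes\alpha_2$ evaluates identically under both signatures when $\alpha_2$ is an arbitrary weighted formula. This cannot be rescued by the zero-coefficient argument: at a point where the coefficient is nonzero, a signature-dependent value of $\beta(\overline{\xi}_{X'})$ gives a signature-dependent product, and your claim that $\mathcal{I}^w$ ``pins the value down'' there is exactly the unproved assertion. The paper resolves it by the observation that the semantics of variable-free formulas is domain independent, which holds because in the intended application (the $(\cdot)^{\Sigma}$ closure of constraints in \AC-rules) weighted formulas contain no quantifiers, so once every variable of $\overline{X}$ is instantiated the evaluation never consults $\sorting_i$. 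If $\beta$ were allowed to contain its own $\bplus$/$\btimes$ quantifiers, the claim would genuinely be in jeopardy, so this restriction is the missing ingredient rather than something to be ``controlled'' by distributivity.
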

\begin{proof}[Proof (sketch)]
Invariance of $\supp_{\srsplus}(\alpha(x), \mathcal{I}_{w})$ w.r.t. $\Sig_i$ (or rather $\mathcal{D}_i$) is shown by structural induction. We show the invariance for one interesting case, namely $\alpha = \alpha_1(x) \stimes \alpha_2$.
    Note that:
    \begin{align*}
        \{\xi \in \sorting_1(s(x)) \mid \llbracket \alpha_1(\xi) \stimes \alpha_2 \rrbracket^{\Sig_1}_{\mathcal{R}}(\mathcal{I}_{w}) \neq e_{\oplus}\}
        \subseteq \{\xi \in \sorting_1(s(x)) \mid \llbracket \alpha_1(\xi) \rrbracket^{\Sig_1}_{\mathcal{R}}(\mathcal{I}_{w}) \neq e_{\oplus}\}
    \end{align*}
    Therefore, we obtain
    \begin{align*}
        & \{\xi \in \sorting_1(s(x)) \mid \llbracket \alpha_1(\xi) \stimes \alpha_2 \rrbracket^{\Sig_1}_{\mathcal{R}}(\mathcal{I}_{w}) \neq e_{\oplus}\}\\
        =& \{\xi \in \{\xi \in \sorting_1(s(x)) \mid \llbracket \alpha_1(\xi) \rrbracket^{\Sig_1}_{\mathcal{R}}(\mathcal{I}_{w}) \neq e_{\oplus}\} \mid \llbracket \alpha_1(\xi) \stimes \alpha_2 \rrbracket^{\Sig_1}_{\mathcal{R}}(\mathcal{I}_{w}) \neq e_{\oplus}\}
    \end{align*}
    Next, we use the induction hypothesis on $\alpha_1(x)$ to obtain
    \begin{align*}
        =& \{\xi \in \{\xi \in \sorting_2(s(x)) \mid \llbracket \alpha_1(\xi) \rrbracket^{\Sig_2}_{\mathcal{R}}(\mathcal{I}_{w}) \neq e_{\oplus}\} \mid \llbracket \alpha_1(\xi) \stimes \alpha_2 \rrbracket^{\Sig_2}_{\mathcal{R}}(\mathcal{I}_{w}) \neq e_{\oplus}\}\\
        =&\{\xi \in \sorting_2(s(x)) \mid \llbracket \alpha_1(\xi) \stimes \alpha_2 \rrbracket^{\Sig_2}_{\mathcal{R}}(\mathcal{I}_{w}) \neq e_{\oplus}\}.
    \end{align*}
    As the semantics of variable-free formulas is domain independent the claim follows.
\end{proof}

Safety of programs is defined as follows.
\begin{define}[Safety]
\label{def:safety}
A program $\Pi$ is safe, if each rule $r\in \Pi$ of form \cref{eq:ruleFormat2} is safe,
i.e.\ fulfills that
\begin{enumerate}[label={\upshape(\roman*)},align=left, widest=iii, leftmargin=*]
\itemsep=0pt
    \item every weighted formula in $r$ is syntactically domain independent w.r.t. its local variables;
    \item for every global variable $X$ there exists some $\beta_i$ s.t.\ (1) $\beta_i$ is an atom and $X$ occurs in it, or
    (2) $\beta_i$ is $X\,{=_{\mathcal{R}}}\, \beta_i'$ and $X$ does not occur in any weighted formula in the body of $r$.\label{enum:safe2}
\end{enumerate}
\end{define}
The restriction in \ref{enum:safe2} that $X$ does not reoccur is necessary to prohibit $p(X) \leftarrow X=_{\mathcal{R}} Y, Y =_{\mathcal{R}} X$. It could however be replaced by a more sophisticated acyclicity condition.
\begin{example}[Safety]
The rules \cref{rule:intake} and \cref{rule:outtake} 
are safe. Without the predicate $d$ the program in Example~\ref{fig:provenance} would not be safe.
\end{example}
\begin{theorem}[Program Domain Independence]
Safe programs are domain independent.
\end{theorem}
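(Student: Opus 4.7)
The plan is to reduce the claim to showing that each rule of a safe program is domain independent, and then to use safety condition~(ii) to collapse the outer universal quantifiers to an effective range that does not depend on the ambient domain. A program corresponds to the conjunction of the $\Sig$-sentences of its rules, so the program is domain independent iff each rule is. Fix a safe rule $r$ with associated sentence $\forall x_1, \dots, x_k \, (B_{\wedge}(r) \rightarrow \phi)^{\Sigma}$, two semiring signatures $\Sig_1, \Sig_2$ for the same semirings $\mathcal{R}_1, \dots, \mathcal{R}_n$ and agreeing on $\mathcal{P}, \mathcal{X}, \mathcal{S}$, and a pointed HT-interpretation $\mathcal{I}_{w}=(\mathcal{I}^{H},\mathcal{I}^{T},w)$ valid over both. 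Let $A$ be the set of constants appearing either in $\mathcal{I}^{T}$ or in $r$; since $\mathcal{I}_{w}$ is a legal interpretation for either $\Sig_{i}$, we have $A \subseteq \mathcal{D}_{1}\cap\mathcal{D}_{2}$.

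Next, using safety~(ii), I would argue that the outer universal quantification of each global $x_{i}$ can be restricted to a set $A(x_{i}) \subseteq \mathcal{D}_{1}\cap\mathcal{D}_{2}$ depending only on $\mathcal{I}^{T}$ and $r$. In case~(1), $x_{i}$ appears in a positive body atom $p(\overline{y})$; any assignment $\xi$ that satisfies this atom must have $p(\overline{y}[x_{i}\mapsto\xi])\in\mathcal{I}^{w'}$, which forces $\xi\in A$. In case~(2), the binding equation $x_{i} =_{\mathcal{R}} \beta_{i}'$ (with $x_{i}$ absent from $\beta_{i}'$ by the safety condition) forces $\xi$ to equal $\llbracket \beta_{i}' \rrbracket^{\Sig_{j}}_{\mathcal{R}}(\mathcal{I}_{w'})$. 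By safety~(i), $\beta_{i}'$ is syntactically domain independent w.r.t.\ its local variables, so the Formula Domain Independence theorem applies to its $\bplus$-closure and yields the same value under $\Sig_{1}$ and $\Sig_{2}$; hence the forced value of $\xi$ is also the same. In either case $A(x_{i})$ is a common set under the two signatures, and assignments outside it trivially satisfy the implication of~$r$.

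For each fixed assignment of the global variables to values in these common ranges, every $(\alpha)^{\Sigma}$ occurring in $B_{\wedge}(r)$ or $\phi$ is, by safety~(i), syntactically domain independent w.r.t.\ its local variables; the Formula Domain Independence theorem then gives identical semantic values under $\Sig_{1}$ and $\Sig_{2}$. The satisfaction of algebraic constraints and ordinary atoms, being defined from these values, therefore agrees across signatures, and so does the overall implication. Quantifying over the common restricted ranges $A(x_{i})$ completes the equivalence.

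The main obstacle is the case in which one global variable's binding references another global variable. I would handle this by showing that the dependency relation on global variables induced by their bindings is acyclic, exploiting that in case~(2) the binding $\beta_{i}'$ does not contain $x_{i}$, and in case~(1) the bound value is forced into the already-common set $A$. An induction along any topological order of this dependency then lets us determine each $A(x_{i})$ using only previously-fixed, domain independent information, which justifies the common-range restriction used above.
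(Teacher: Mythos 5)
Your proof is correct and follows essentially the same route as the paper's: decompose the program rule by rule, use safety condition (ii) to show that assignments of global variables outside the common effective range (constants occurring in the interpretation, resp.\ the forced semiring value of a binding constraint) trivially satisfy the rule, and invoke the Formula Domain Independence theorem via safety (i) for the remaining assignments. The one piece of extra machinery you introduce, the topological induction on the dependency relation among global variables, is not actually needed: safety condition (ii.2) already requires that a variable bound by $X =_{\mathcal{R}} \beta'$ occur in \emph{no} weighted formula of the body, so no such variable can appear in another variable's binding and the dependency graph is trivially acyclic.
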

\begin{proof}[Proof (sketch)]
Let $\Sig_i = \langle \mathcal{D}_i, \mathcal{P}, \mathcal{X}, \mathcal{S}, \sorting_i \rangle, i = 1,2$ be semiring signatures s.t.\ $\Pi$ is a $\Sig_i$-formula for $i = 1,2$ and
let $\mathcal{I}_{w} = (\mathcal{I}^{H},\mathcal{I}^{T},w)$ be a pointed $\Sig_i$-HT-interpretation for $i =1,2$. 

Let $r \in \Pi$. If $r$ does not contain global variables, the claim is evident. Otherwise assume $r = \forall x_1, \dots, x_n \; \alpha(x_1, \dots, x_n)$. When $\xi_j \in \sorting_1(s(x_j)) \cap \sorting_2(s(x_j))$ ($j = 1, \dots, n$), the semantics of $\alpha(\xi_1, \dots, \xi_n)$ does not depend on $\Sig_i$.
Suppose that $\xi_j \in \sorting_1(s(x_j)) \setminus \sorting_2(s(x_j))$. Then $x_j$ cannot occur in place of a semiring value as for semiring signatures, we have $\sorting_1(s(x_j)) = \bigcap_{j = 1}^{m} R_{k_j} = \sorting_2(s(x_j))$. Therefore $x_j$ has to satisfy item (ii.1) of safety, implying that some
atom $\beta_k$ in the body of $r$ is not satisfied by $\mathcal{I}_{w}$ and hence $\mathcal{I}_{w} \models_{\Sig_i} \alpha(\xi_1, \dots, \xi_n)$ for $i = 1,2$.
\end{proof}
Not every domain independent program is safe. E.g. $p(X) \leftarrow \top =_{\mathbb{B}} q(X)$ is not safe but is equivalent to the safe rule $p(X) \leftarrow q(X)$ since $X$ is a global variable and we can only derive $p(x)$ when $\llbracket q(x) \rrbracket_{\mathbb{B}} = 1$, i.e.\ when $q(x)$ holds. Domain independence is undecidable but safety is sufficient, allows for complex rules like \cref{rule:intake}, \cref{rule:outtake} and those in Example~\ref{fig:provenance}, and is easily checked.

In the rest of the paper, we restrict ourselves to domain independent programs and can therefore remove the annotation $\Sig$ from $\models_{\Sig}$ and $\llbracket \cdot \rrbracket_{\mathcal{R}}^{\Sig}$ and use $\models$ and $\llbracket \cdot \rrbracket_{\mathcal{R}}$ instead. Accordingly, we do not need to specify the signature for \AC-programs $\Pi$ anymore, as any semiring signature $\Sig$ s.t. $\Pi$ is an \AC-program over $\Sig$ suffices.
\paragraph{\bf Program Equivalence}
An additional benefit of HT-semantics is that we are able to characterise strong program equivalence as equivalence in the logic of HT.
\begin{define}[Strong Equivalence]
Programs $\Pi_1$ and $\Pi_2$ are strongly equivalent, denoted by $\Pi_1 \,{\equiv_s}\, \Pi_2$, if for every program $\Pi'$ 
 the equilibrium models of $\Pi_1 \,{\cup}\, \Pi'$ and $\Pi_2 \,{\cup}\, \Pi'$ coincide.
\end{define}
Similar results have already been proven for classical programs with \cite{10.1007/978-3-540-89982-2_46} or without variables \cite{lifschitz2001strongly} and many more.
As with classical programs:
\begin{theorem}
For any $\Pi_1, \Pi_2$ programs, $\Pi_1 \equiv_s \Pi_2$ iff $\Pi_1$ has the same HT-models, i.e. satisfying pointed HT-interpretations, as $\Pi_2$.
\end{theorem}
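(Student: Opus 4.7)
The plan is to prove the two directions separately, following the classical Pearce--Valverde template for strong equivalence, adapted to the present weighted HT setting.

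For the easy direction (HT-equivalence implies strong equivalence), I would simply observe that satisfaction of a program is conjunctive over its rules: so if $\Pi_1$ and $\Pi_2$ have the same HT-models, then so do $\Pi_1 \cup \Pi'$ and $\Pi_2 \cup \Pi'$ for any program $\Pi'$. Since the notion of equilibrium model (Defn.~\ref{def:eq-model}) is defined purely in terms of HT-satisfaction, the equilibrium models of the two extensions coincide, giving $\Pi_1 \equiv_s \Pi_2$.

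For the hard direction (strong equivalence implies HT-equivalence), I would argue the contrapositive. Assume WLOG that some pointed HT-interpretation $(\mathcal{I}^H, \mathcal{I}^T)$ is an HT-model of $\Pi_1$ but not of $\Pi_2$, and construct $\Pi'$ witnessing $\Pi_1 \not\equiv_s \Pi_2$. Split into two cases. First, if $\mathcal{I}^H = \mathcal{I}^T$, take $\Pi'$ to be the facts $\{a \leftarrow \top \mid a \in \mathcal{I}^T\}$; any HT-interpretation $(\mathcal{J}^H, \mathcal{I}^T)$ satisfying $\Pi'$ must have $\mathcal{J}^H \supseteq \mathcal{I}^T$, so $\mathcal{I}^T$ is an equilibrium model of $\Pi_1 \cup \Pi'$ (as $(\mathcal{I}^T, \mathcal{I}^T) \models \Pi_1$) but not of $\Pi_2 \cup \Pi'$ (as $(\mathcal{I}^T, \mathcal{I}^T) \not\models \Pi_2$). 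Second, if $\mathcal{I}^H \subsetneq \mathcal{I}^T$, first reduce to the sub-case where $(\mathcal{I}^T, \mathcal{I}^T)$ is an HT-model of both programs (otherwise the previous case applies to $(\mathcal{I}^T, \mathcal{I}^T)$), and then take
\[
\Pi' \;=\; \{a \leftarrow \top \mid a \in \mathcal{I}^H\} \;\cup\; \{\bot \leftarrow a \mid a \notin \mathcal{I}^T\} \;\cup\; \{b \leftarrow a \mid a,b \in \mathcal{I}^T \setminus \mathcal{I}^H\},
\]
encoding each $\bot \leftarrow a$ via a trivially false algebraic head like $1 \neq_{\mathbb{B}} 1$ if necessary. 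A direct check shows that the HT-models of $\Pi'$ with total world $\mathcal{I}^T$ are exactly $(\mathcal{I}^H, \mathcal{I}^T)$ and $(\mathcal{I}^T, \mathcal{I}^T)$, because the binding rules force the atoms of $\mathcal{I}^T \setminus \mathcal{I}^H$ to appear in the here-world either all together or not at all. Consequently $\mathcal{I}^T$ is an equilibrium model of $\Pi_2 \cup \Pi'$ (the only smaller HT-witness, $(\mathcal{I}^H, \mathcal{I}^T)$, fails $\Pi_2$) but not of $\Pi_1 \cup \Pi'$ (it is witnessed by $(\mathcal{I}^H, \mathcal{I}^T)$).

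The main obstacle will be verifying in the $\mathcal{I}^H \subsetneq \mathcal{I}^T$ case that no intermediate $(\mathcal{J}^H, \mathcal{I}^T)$ with $\mathcal{I}^H \subsetneq \mathcal{J}^H \subsetneq \mathcal{I}^T$ satisfies $\Pi'$; this is exactly what the binding rules buy, but requires a careful case analysis of the here-world membership of the atoms of $\mathcal{I}^T \setminus \mathcal{I}^H$. A secondary concern is checking that the constructed $\Pi'$ is syntactically a legal \AC-program and safe, which is immediate since it consists only of propositional facts, Horn implications, and algebraic integrity constraints, all within the permitted rule format.
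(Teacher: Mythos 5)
Your proof is correct and follows essentially the same route as the paper: the easy direction from the conjunctive nature of satisfaction, and for the hard direction the classical Lifschitz--Pearce--Valverde construction $\Pi' = \{a \leftarrow\; \mid a \in \mathcal{I}^{H}\} \cup \{b \leftarrow a \mid a,b \in \mathcal{I}^{T}\setminus\mathcal{I}^{H}\}$, whose HT-models with there-world $\mathcal{I}^{T}$ are exactly $(\mathcal{I}^{H},\mathcal{I}^{T})$ and $(\mathcal{I}^{T},\mathcal{I}^{T})$. Your explicit case split on whether $(\mathcal{I}^{T},\mathcal{I}^{T})$ models both programs is in fact a point the paper's one-line sketch glosses over (read literally, the sketch assigns $\mathcal{I}^{T}$ as an equilibrium model of the wrong program when $(\mathcal{I}^{T},\mathcal{I}^{T})\not\models\Pi_2$), while your extra integrity constraints $\bot \leftarrow a$ for $a \notin \mathcal{I}^{T}$ are redundant, since the here-world of any HT-interpretation with there-world $\mathcal{I}^{T}$ is already a subset of $\mathcal{I}^{T}$.
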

\begin{proof}[Proof (sketch)]
The direction $\Leftarrow$ is clear. For $\Rightarrow$ we can generalise the proof in \cite{lifschitz2001strongly}, by constructing $\Pi'$, which asserts a subset of the interpretation $\mathcal{I}^{T}$ that is ensured to be stable ($\mathcal{I}^{H}$), and a subset that if partly present is ensured to be fully present ($\mathcal{I}^{T}\setminus\mathcal{I}^{H}$). 

Let $\Pi_1$ and $\Pi_2$ have different HT-models. W.l.o.g. there must be at least one HT-interpretation $(\mathcal{I}^{H}, \mathcal{I}^{T})$ that is an HT-model of $\Pi_1$ but not of $\Pi_2$. 
As in \cite{lifschitz2001strongly} we simply define
\[
\Pi' = \{ p(\overline{x}) \leftarrow \mid p(\overline{x}) \in \mathcal{I}^{H} \} \cup \{p(\overline{x}) \leftarrow q(\overline{y}) \mid p(\overline{x}), q(\overline{y}) \in \mathcal{I}^{T} \setminus \mathcal{I}^{H}\}
\]
Then $\mathcal{I}^{T}$ is an equilibrium model of $\Pi_2\cup \Pi'$, but not of $\Pi_1 \cup \Pi'$ and therefore $\Pi_1$ and $\Pi_2$ are not strongly equivalent.
\end{proof}
Note that since $\mathcal{I}^{H}$ may be infinite, this may result in programs of infinite size. This can be circumvented if auxiliary predicates are allowed in $\Pi'$ (see the Appendix).

\section{Computational Complexity}
\label{sec:complexity}
We consider the computational complexity of the following problems:
\begin{itemize}
    \item Model Checking (MC): Given a safe program $\Pi$ and an interpretation $\mathcal{I}$ of $\Pi$, is $\mathcal{I}$ is an equilibrium model of $\Pi$?
    \item Satisfiability (SAT): Given a safe program $\Pi$, does $\Pi$
      have an equilibrium model?
    \item Strong Equivalence (SE): Given safe programs $\Pi_1,\Pi_2$, are $\Pi_1$ and $\Pi_2$ strongly equivalent?
\end{itemize}
The main factor that complicates these problems is that we may have to evaluate weighted formulas over an arbitrary semiring. If we want to prevent an increase in complexity, then we need to encode the elements of the semiring in some way which allows for efficient calculations and comparison. To this end, we use Efficient Encodability from~\cite{eiter2020wlars}.
\begin{define}[Efficiently Encodable Semiring]
Let $\mathcal{R} = (R, \srsplus, \srstimes, \srzero, \srone)$ be a semiring with strict order $>$ and $e : R \rightarrow \mathbb{N}$ a polynomially computable, injective function. We use $\lVert r \rVert = \log_2(e(r))$ for the length of $e(r)$'s representation. We say $\mathcal{R}$ is \emph{efficiently encoded} by $e$ if
\begin{myitemize}
    \item some $c \in \mathbb{N}$ exists such that for $r_1, r_2 \in R'$ and $\odot \in \{\srsplus, \srstimes\}$ \smallskip\\
    \mbox{\qquad} $\lVert r_1\odot r_2\rVert \leq \lVert r_1 \rVert + \lVert r_2 \rVert + c$~~ 
    and ~~$\max(\lVert -r_1 \rVert, \lVert r_1^{-1} \rVert) \leq \lVert r_1 \rVert + c;$ \hfill
    \rem{\refstepcounter{equation}(\theequation)\label{eq:eff2}}\smallskip
    \item we can compute $e(r_1\odot r_2), e(-r_1), e(r_1^{-1})$ in 
    polynomial time given $e(r_1),e(r_2)$ resp.\ $e(r_1)$;
    \item $r_1 > r_2$ is decidable in time polynomial in $\lVert r_1 \rVert + \lVert r_2 \rVert$.
\end{myitemize}


\end{define}
This restriction is
mild in practice; for example $\mathbb{B},$ $\mathbb{N},$ $\mathbb{Z},$ $\mathbb{Q},$ $\mathcal{R}_{\max},$ $2^{A}$ are efficiently encodable.
\begin{theorem}[Ground Complexity]
For variable-free programs
over efficiently encoded semirings
(i) MC and (propositional) SE are co-NP-complete, and (ii) SAT is $\Sigma^{p}_2$-complete.
\end{theorem}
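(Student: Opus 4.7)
The hardness bounds are inherited from the classical case: propositional disjunctive logic programs can be written as \AC-rules via $1 =_{\mathbb{B}} a_1 \splus \dots \splus a_n \leftarrow B(r)$ (Section~\ref{sec:comp}), and for those MC and SE are known to be co-NP-hard while SAT is $\Sigma_2^p$-hard, so the corresponding lower bounds transfer verbatim. The entire work is thus to establish the matching upper bounds, and the single crucial ingredient for all three is that satisfaction of a variable-free \AC-program with respect to a given pointed HT-interpretation can be decided in polynomial time.

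\textbf{Step 1 (polynomial-time evaluation).} By structural induction on a variable-free weighted formula $\alpha$ I will show that $\llbracket \alpha \rrbracket_{\mathcal{R}}(\mathcal{I}_w)$ is computable in time polynomial in $|\alpha|$ plus the total representation length of the constants appearing in $\alpha$. The induction step leans on condition (\ref{eq:eff2}): the bounds $\lVert r_1 \odot r_2 \rVert \leq \lVert r_1 \rVert + \lVert r_2 \rVert + c$ and the analogous bounds for $-(\cdot), (\cdot)^{-1}$ guarantee a linear-in-$|\alpha|$ envelope on every intermediate value, while each individual arithmetic operation and each order comparison for a constraint is polynomial by assumption. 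The base cases for $\Sig$-formulas $\phi$ are trivial since the program is variable-free and the two worlds $H,T$ can be inspected directly. Consequently, evaluating an algebraic constraint $k \sim_{\mathcal{R}} \alpha$ reduces to evaluating $\alpha$ at the at most two relevant worlds plus one polynomial-time comparison, and checking $(\mathcal{I}^H,\mathcal{I}^T,w) \models \Pi$ is polynomial overall.

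\textbf{Step 2 (upper bounds).} For MC, I first verify in polynomial time that $(\mathcal{I},\mathcal{I},H) \models \Pi$, then test minimality by the co-NP check: guess $\mathcal{I}' \subsetneq \mathcal{I}$ and verify in polynomial time that $(\mathcal{I}',\mathcal{I},H) \models \Pi$; no such witness exists iff $\mathcal{I}$ is an equilibrium model. For SAT, guess $\mathcal{I}$ and invoke MC as a co-NP oracle, which yields a $\Sigma_2^p$ procedure. For SE, the characterisation of Section~\ref{sec:lang} gives $\Pi_1 \equiv_s \Pi_2$ iff the two programs have the same HT-models; since atoms outside the signature of $\Pi_1 \cup \Pi_2$ do not affect satisfaction of either program, it suffices to consider HT-interpretations $(\mathcal{I}^H,\mathcal{I}^T)$ over that finite set of atoms. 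The complement of SE then reduces to guessing a differentiating pair and performing two polynomial-time satisfaction checks, placing SE in co-NP.

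\textbf{Main obstacle.} The delicate step is clearly Step~1: without efficient encodability, repeated use of $\splus$ or $\stimes$ could produce values whose representation blows up exponentially (e.g.\ by iterated squaring), so that a single constraint check is already super-polynomial and the three problems climb a level. The bound (\ref{eq:eff2}) is precisely what is needed to push the induction through with a linear size envelope. Two minor points to handle are the definedness clause for $\srzero^{-1}$ and the $\geq$-quantification over worlds $w' \geq w$ in $\rightarrow_{\mathcal{R}}$ and $\sim_{\mathcal{R}}$; the former reduces to a local pattern check, and the latter only duplicates evaluation a constant number of times. With Step~1 in place, the arguments for MC, SAT, and SE are routine adaptations of the standard disjunctive-ASP proofs.
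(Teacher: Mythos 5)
Your proposal is correct and follows essentially the same route as the paper's (very brief) proof sketch: hardness is inherited from propositional disjunctive programs and HT-logic via the $1 =_{\mathbb{B}} a_1 \splus \dots \splus a_n$ encoding, and the upper bounds come from standard guess-and-check arguments whose only non-classical ingredient is that checking $\mathcal{I}_w \models k \sim_{\mathcal{R}} \alpha$ is polynomial, which you correctly derive from the efficient-encodability conditions. Your Step 1 merely spells out in more detail the induction that the paper leaves implicit.
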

\begin{proof}[Proof (sketch)]
The hardness parts are inherited from 
disjunctive logic programs \cite{DBLP:journals/csur/DantsinEGV01},
cf.\ Section~\ref{sec:comp}, resp.\ HT-Logic \cite{lifschitz2001strongly}.
The membership parts 
result by guess and check algorithms: for similar bounds as in
ordinary ASP, we just need that $\mathcal{I}_{H} \models k
\sim_{\mathcal{R}} \alpha$ is 
polynomially decidable given $(\mathcal{I}^{H}, \mathcal{I}^{T})$ and 
$k \sim_{\mathcal{R}} \alpha$; as $\mathcal{R}$ is efficiently
encoded, this 
holds.
\end{proof}
The non-ground complexity is significantly higher.
\begin{theorem} [Non-ground Complexity]
\label{theo:non-ground-complexity}
For safe programs over efficiently encoded semirings
(i) MC is in EXPTIME, both co-NP$^{\text{PP}}$-hard and NP$^{\text{PP}}$-hard, and
(ii) SAT and SE are undecidable.
\end{theorem}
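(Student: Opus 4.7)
}

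\textbf{Part (i), EXPTIME upper bound for MC.} My plan is to reduce non-ground MC to ground MC by grounding. Since $\Pi$ is safe, every local variable in a weighted formula is bound by a positive atom (or by a value-providing equation), so the support $\supp_{\srsplus}(\alpha(\bar x), \mathcal{I}_w)$ is contained in the active domain of $\mathcal{I}^T$ together with the constants of $\Pi$, and similarly for global variables. Grounding $\Pi$ over this active domain yields a ground program of size at most exponential in $\lVert\Pi\rVert + \lVert\mathcal{I}\rVert$. On the ground program, MC is in co-NP by the previous theorem, where the polynomial-time check of each algebraic constraint uses efficient encodability (whose closure bounds in~(\ref{eq:eff2}) keep the representations of intermediate semiring values polynomial). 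A co-NP computation on an exponential-sized input is in EXPTIME, giving the bound.

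\textbf{Part (i), co-NP$^{\textrm{PP}}$ and NP$^{\textrm{PP}}$ hardness.} The idea is that a single non-ground aggregate $\bplus \overline{X}\, \alpha(\overline{X})$ over $\mathbb{N}$ natively performs a \#P-style counting step. I would reduce from E-MajSAT (co-NP$^{\textrm{PP}}$-complete) and its dual for NP$^{\textrm{PP}}$: given a Boolean formula $\varphi(\overline{Y},\overline{Z})$, encode $\overline{Y}$ by a disjunctive guess (expressible via Boolean algebraic choice constraints, Section~\ref{sec:comp}), use facts $\rem{bit}(0), \rem{bit}(1)$ to range local variables $\overline{Z}$, and introduce a weighted formula $\alpha(\overline{Z})$ over $\mathbb{N}$ that evaluates to $1$ exactly when $\varphi(\overline{Y},\overline{Z})$ is satisfied, so that a constraint $k \leq_{\mathbb{N}} \bplus \overline{Z}\, \alpha(\overline{Z})$ forces the count to exceed the majority threshold. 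The $\Sigma_2^p$-style outer quantifier of E-MajSAT is realised by the combination of the $\overline{Y}$ guess with the minimality condition of equilibrium models, exactly as in the reduction for disjunctive ASP \cite{DBLP:journals/csur/DantsinEGV01}; dualising the polarity of the threshold and the role of the guess yields NP$^{\textrm{PP}}$-hardness. The main technical burden here is engineering the reduction so that the guessed part does not interfere with the counting aggregate; the idiom $\neg\neg q(\overline{X}) \stimes (q(\overline{X})\rightarrow p(\overline{X}))$ from Section~\ref{sec:headC} is exactly the tool to separate ``selection'' from ``counting''.

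\textbf{Part (ii), undecidability of SAT.} I plan a reduction from Hilbert's Tenth Problem, exploiting that weighted formulas over $\mathbb{Z}$ (or $\mathbb{Q}$) can encode arbitrary polynomials via $\splus$ and $\stimes$. Given a Diophantine equation $P(x_1,\dots,x_n)=0$, I would axiomatise the integers through an unary predicate $\rem{int}$ (seeded by a fact and a rule $\rem{int}(X\splus 1)\leftarrow \rem{int}(X)$, suitably phrased as an algebraic constraint so that it remains safe) and guess values $v_1,\dots,v_n$ for the $x_i$ via choice constraints of the form $1 \leq^c_{\mathbb{Z}} \neg\neg \rem{int}(X) \stimes (\rem{int}(X) \rightarrow \rem{val}_i(X)) \stimes 1$. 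A single algebraic constraint $0 =_{\mathbb{Z}} \alpha_P$, where $\alpha_P$ is obtained from $P$ by replacing each occurrence of $x_i$ by $\bplus X\, \rem{val}_i(X) \stimes X$, then enforces $P(v_1,\dots,v_n) = 0$ in any equilibrium model. Conversely, any integer solution yields an equilibrium model, so SAT of the program decides the Diophantine equation.

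\textbf{Part (ii), undecidability of SE.} I would reduce from (un)satisfiability: given any safe $\Pi$, construct $\Pi_1 = \Pi \cup \{p \leftarrow\}$ and $\Pi_2 = \{p \leftarrow\}$ for a fresh atom $p$, and show via the HT-model characterisation of strong equivalence (previous section) that $\Pi_1 \equiv_s \Pi_2$ iff $\Pi$ has no ``distinguishing'' HT-model; a small auxiliary rule tying the truth of $p$ to an equilibrium witness of $\Pi$ then turns this into a reduction from SAT. The main obstacle in this part is selecting the encoding so that safety and the ban on quantifiers inside algebraic constraints are both honoured; once the recursive $\rem{int}$ predicate and prefix-only $\bplus$ are in place, the construction goes through directly.
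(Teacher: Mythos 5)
Most of your proposal tracks the paper's argument or substitutes a workable alternative. The EXPTIME bound via grounding over the active domain plus efficient encodability is essentially the paper's argument. For the hardness of MC the paper also reduces from (co-)E-MAJSAT, but more cleanly than you do: it writes a single rule $f \leftarrow v(X_1),\dots,v(X_k),\, m <_{\mathbb{N}} v(X_{k+1})\stimes\cdots\stimes v(X_n)\stimes\phi(\overline{X})$ over facts $v(0),v(1)$, so the existential over the partial assignment is realised by the universally quantified \emph{global} variables (the rule fires iff some instance does) rather than by a choice-constraint guess plus minimality; both NP$^{\text{PP}}$- and co-NP$^{\text{PP}}$-hardness then follow by asking whether $\{f,v(0),v(1)\}$ resp.\ $\{v(0),v(1)\}$ is an equilibrium model. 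For SAT-undecidability you take a genuinely different route (Hilbert's Tenth over $\mathbb{Z}$ instead of the paper's Mortal Matrix Problem over $\mathbb{Z}^{d\times d}$); this can be made to work, but as written your constraint $0=_{\mathbb{Z}}\alpha_P$ nests $\bplus$ inside products, which \AC-rules forbid ($\bplus$ may only appear as a prefix), and your choice constraint only enforces ``at least one'' value per variable; you would need to rewrite $\alpha_P$ as a single prefix-summed product $\rem{val}_1(X_1)\stimes\cdots\stimes\rem{val}_n(X_n)\stimes P(\overline{X})$ and force exactly one value.

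The genuine gap is in the SE part. Strong equivalence is characterised by equality of \emph{HT-models}, not of equilibrium models, and HT-model equivalence is oblivious to the minimality condition; there is no ``auxiliary rule tying the truth of $p$ to an equilibrium witness of $\Pi$'' available in the logic, so your sketch of a reduction from SAT to SE does not go through (for ordinary propositional disjunctive programs SAT is $\Sigma_2^p$-complete while SE is only co-NP-complete, which is a strong hint that SE-hardness cannot simply be inherited from SAT-hardness). The paper instead reduces MMP to SE directly: with $\Pi$ the MMP program and $\Pi'=\Pi\setminus\{\bot\leftarrow\neg p(0_d)\}$, both programs are essentially positive, their HT-models are exactly the pairs of interpretations closed under the rules (plus, for $\Pi$, the requirement $p(0_d)\in\mathcal{I}^H$), and hence $\Pi\equiv_s\Pi'$ iff $p(0_d)$ lies in the least closed set --- which is the undecidable MMP question. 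The essential idea you are missing is to pick programs whose HT-models admit a first-order closure characterisation, so that HT-equivalence itself encodes an undecidable reachability question.
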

\begin{proof}[Proof (sketch)]
(i) Given an interpretation $\mathcal{I}$ as a set of ground atoms, we check $(\mathcal{I}, \mathcal{I}, H) \models r'$ and $(\mathcal{I}', \mathcal{I}, H) \not\models r'$ for each $\mathcal{I}'\subsetneq \mathcal{I}$ and each ground instance $r'$ of a rule $r\in \Pi$ in exponential time. The evaluation of algebraic constraints $k \sim_{\mathcal{R}} \alpha$ is feasible in exponential time, since if $\alpha$ is of the form $\Sigma y_1,\ldots,y_n \alpha'(y_1,\ldots,y_n)$ where $\alpha'$ is quantifier-free, by safety of the program each $y_i$ must occur in some atom $p(\overline{x})$. That is, to evaluate $\alpha$, we only need to consider values $\xi(y_i)$ for $y_i$, $i=1,\ldots,_n$ that occur in the interpretation $\mathcal{I}$. There are exponentially many such $\xi$; for each of them, the value of $\alpha'(\xi(y_1),\ldots,\xi(y_n))$ can be computed in polynomial time given that $\mathcal{R}$ is efficiently encoded, yielding a value $r_\xi$ such that $e(r_\xi)$ occupies polynomially many bits. The aggregation $\Sigma_\xi\, r_\xi$ over all $\xi$ is then feasible in exponential time by the assertion that $\|r_1 \oplus r_2\| \leq \|r_1\|+\|r_2\|+c$ and that $e(r_1 \oplus r_2)$ is computable in polynomial time given $e(r_1),e(r_2)$.

The {(co-)}NP$^{\text{PP}}$-hardness is by a reduction from (co-)E-MAJSAT
\cite{littman1998computational}, which asks whether for a Boolean formula $\phi(x_1, \dots, x_n)$ a partial assignment to $x_1, \dots, x_k$ exists s.t. more than $m = 2^{n-k-1}$ of the assignments to $x_{k+1},\dots, x_n$ satisfy $\phi(\overline{x})$. Then the program 

\smallskip

\centerline{$
v(0) \leftarrow\;; \phantom{asdf} v(1) \leftarrow\;;
    \hphantom{asdf}f \leftarrow v(X_1), \dots, v(X_k), m <_{\mathbb{N}} v(X_{k+1}) \stimes \dots \stimes v(X_n) \stimes \phi(\overline{X})$}

\smallskip


\noindent has an equilibrium model $\{f, v(0), v(1)\}$ if the answer for
E-MAJSAT is yes and an equilibrium model $\{v(0), v(1)\}$ if the
answer is no. 

(ii) The undecidable Mortal Matrix Problem (MMP) asks whether any product of matrices in $X= \{X_1, \dots, X_n\} \subset \mathbb{Z}^{d\times d}$ evaluates to the zero matrix $0_d$ \cite{DBLP:journals/corr/CassaigneHHN14}.
The semiring $(\mathbb{Z}^{d\times d}, +, \cdot, 0_d, 1_d)$ is efficiently encodable, and the program $\Pi$

\smallskip

\centerline{$p(X_1) \leftarrow\;; \phantom{asd} \dots \phantom{asd}
  p(X_n) \leftarrow\;; \phantom{asdf} \bot \leftarrow \neg p(0_d);\; \phantom{asdf}p(Y) \leftarrow p(Z_1), p(Z_2), Y =_{\mathbb{Z}^{d\times d}} Z_1\stimes Z_2
$}

\smallskip

\noindent has an equilibrium model iff $X$ is a yes-instance of MMP,
as $p(0_d)$ needs to be supported. For undecidability, let $\Pi$ be the program from above and $\Pi' = \Pi\setminus\{\bot \leftarrow \neg p(0)_d)\}$. As $\Pi'$ has no negation, its HT-models are the interpretations $(\mathcal{I}',\mathcal{I})$ where both $\mathcal{I}'$ and $\mathcal{I}$ are closed under the rules of $\Pi'$, sets $S$ such that $p(X_1), \dots, p(X_n) \in S$ and whenever $p(Y), p(Z) \in S$ then also $p(Y*Z) \in S$. 
Similarly, the HT-models of $\Pi$ are the interpretations $(\mathcal{I}',\mathcal{I})$ where
$\mathcal{I}'$ and $\mathcal{I}$ are closed under the rules of $\Pi'$ and in addition $p(0_d) \in \mathcal{I}'$.

Therefore, $\Pi \equiv_s \Pi'$ iff $p(0_d) \in L$, where $L$ is the least set closed under the rules of $\Pi'$, which holds iff the answer for the mortal matrix problem on $X$ is yes.
\end{proof}
As
NP$^{\text{PP}}$ contains the polynomial hierarchy (PH),
this places MC between PH and EXPTIME; stronger assumptions on the encoding $e(r)$ 
allow for PSPACE. In particular, for programs over the canonical
semiring $\mathbb{N}$,
MC is co-NP$^{C}$-complete for $C = $ NP$^{\text{PP}}$
(while SAT and SE are undecidable).
Na\"ive evaluation of $k \sim_{\mathcal{R}} \alpha$ is infeasible in polynomial space, as $\lVert \llbracket \alpha \rrbracket_{\mathcal{R}}(\mathcal{I}_{H})\rVert$ can be exponential in the number of variables in $\alpha$.
We can retain decidability for SAT and SE by limiting value invention,
i.e. constraints 
$X =_{\mathcal{R}} \alpha(\overline{Y})$, and value guessing. For the
latter, we adapt 
domain restrictedness from \cite{niemela1999stable}.
\begin{define}[Domain Restrictedness]
An algebraic constraint is \emph{domain restricted} in variables $\overline{X}$, if it is of the form \\[-6pt]
\centerline{$k \sim_{\mathcal{R}} \neg \neg \alpha(\overline{X}) \stimes (\alpha(\overline{X}) \rightarrow \beta(\overline{X})) \stimes \gamma(\overline{X}),$}\\
\noindent where $\alpha(\overline{X}), \beta(\overline{X})$ are syntactically domain independent and all atoms in $\gamma(\overline{X})$ are locally ground.
\end{define}
Intuitively, only constants ``known'' by predicates in $\alpha$ can be ``transferred'' to predicates in $\beta$, and $\gamma$ assigns a weight to each substitution. The pattern is explained less generally in Section \ref{sec:comp}. Let us call a semiring \emph{computable} if $\srsplus, \srstimes, -, ^{-1}, >$ are computable. Then we obtain:
\begin{theorem}
\label{theo:domain-restricted-complexity}
For safe programs without value invention where all algebraic constraints in rule heads are domain restricted and all semirings are computable, both SAT and SE are decidable. 
\end{theorem}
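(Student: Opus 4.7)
The plan is to reduce SAT and SE to a finite enumeration over a bounded Herbrand-style base. Under the stated restrictions, any equilibrium model (for SAT) or distinguishing HT-interpretation (for SE) can be located within a finite space built from the constants occurring in $\Pi$. Let $C_\Pi$ denote the (finite) set of constants syntactically in $\Pi$ and let $HB(\Pi)$ be the corresponding finite Herbrand base built from the predicates of $\Pi$ and tuples over $C_\Pi$.

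The core step is the closure claim that every equilibrium model $\mathcal{I}$ of $\Pi$ satisfies $\mathcal{I} \subseteq HB(\Pi)$. For ordinary atom-headed rules, safety together with the absence of value-invention constraints $X =_{\mathcal{R}} \beta$ forces every global variable to appear in a positive body atom, so a firing ground instance draws its arguments solely from atoms already in the interpretation. For local variables inside weighted formulas, syntactic domain independence confines each $\bplus$-support to tuples of interpretation constants. For head constraints in the domain-restricted form $k \sim_{\mathcal{R}} \neg\neg\alpha(\overline{X}) \stimes (\alpha(\overline{X}) \rightarrow \beta(\overline{X})) \stimes \gamma(\overline{X})$, the prefix $\neg\neg\alpha(\overline{X})$ restricts nonzero summands to substitutions $\overline{X} \mapsto \overline{c}$ with $\alpha(\overline{c})$ true, so the implication can only force $\beta(\overline{c})$ for tuples already witnessed by $\alpha$. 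A fixpoint-style induction on rule firings, with the already-established persistence property used to handle the $H$- and $T$-worlds jointly, keeps all forced atoms within $HB(\Pi)$.

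With this closure in hand, deciding SAT reduces to enumerating the candidate interpretations $\mathcal{I} \subseteq HB(\Pi)$ and testing $(\mathcal{I},\mathcal{I},H) \models \Pi$ together with $(\mathcal{I}',\mathcal{I},H) \not\models \Pi$ for each $\mathcal{I}' \subsetneq \mathcal{I}$ by evaluating finitely many ground algebraic constraints $k \sim_{\mathcal{R}} \alpha$. Every $\bplus$-quantifier in $\alpha$ has finite support (syntactic domain independence plus finiteness of $\mathcal{I}$), and the computability of $\srsplus, \srstimes, -, {}^{-1}, >$ makes the aggregation and comparison effective. For SE, the strong equivalence theorem reduces the problem to checking that $\Pi_1$ and $\Pi_2$ have the same HT-models; setting $HB := HB(\Pi_1) \cup HB(\Pi_2)$, the same closure argument together with domain independence of safe programs confines the search to HT-interpretations $\mathcal{I}^H \subseteq \mathcal{I}^T \subseteq HB$, possibly after renaming constants not referenced by either program to finitely many canonical representatives.

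The main obstacle is the HT-refined closure claim. The domain-restricted pattern behaves cleanly at the $H$-world via $\neg\neg\alpha$, but a substitution may produce a nonzero $T$-value of the implication without a corresponding $H$-value, potentially introducing fresh $T$-constants via $\alpha \rightarrow \beta$. A joint inductive argument on $(\mathcal{I}^H, \mathcal{I}^T)$ that uses persistence of $\alpha$ to transfer closure between the worlds is needed to rule this out; additionally, for SE a renaming argument must justify that any distinguishing HT-interpretation with constants outside $HB$ can be replaced by one over $HB$ without affecting satisfaction of either $\Pi_1$ or $\Pi_2$, which is where safety plus the absence of value invention again prove decisive.
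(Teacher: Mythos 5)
Your proposal follows essentially the same route as the paper's proof: the paper also argues that such programs are finitely groundable over just the constants occurring in the program, after which the resulting variable-free programs are decidable because the semirings are computable. You spell out the closure and renaming details more explicitly than the paper's sketch, but the underlying argument is the same.
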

\begin{proof}[Proof (sketch)]
For this class of programs we can show that they are finitely groundable, i.e. groundable over a finite domain without changing the answer sets, by using only the constants that occur in a program as the domain. Then the ground programs are variable free and since the semirings are computable both SAT and SE are decidable.
\end{proof}
However, prohibiting value invention entirely is 
unnecessarily strong. Weaker restrictions like aggregate stratification \cite{faber2011semantics} or argument restrictedness \cite{lierler2009one} can be adapted to ASP(\AC);
the resulting programs are finitely ground
and decidable.
 
\section{Related Work \& Conclusion}
\label{sec:rela}
A number of related works has already been mentioned above; we concentrate here on highlighting the differences of our approach to others.
\begin{myitemize}
\itemsep=0pt
\item 
{Semiring-based Constraint Logic Programming}
\cite{bistarelli1997semiring},
{aProbLog} 
\cite{kimmig2011algebraic} and our previous work~\myciteyear{eiter2020wlars} also use semiring semantics. 
However, 
Bistarelli et al.\ aimed at semantics for 
CLP with multi-valued interpretations over lattice-like semirings and the other works
aimed at semantics for weighted model counting and model selection over semirings. 
\item {Hybrid ASP} by Cabalar et al.~\myciteyear{cabalar2020uniform,cabalar2020asp}. 
They defined an extension of HT Logic that includes general constraints and multi-valued interpretations 
for handling mixtures of ASP and CP. The approach 
integrates conditionals and aggregates; however, it relies on extra definitions to introduce their semantics
while our semantics can capture the different constructs natively. 
The syntax of constraints (apart
from 
over the reals) is left open, while we provide a uniform 
syntax over any semiring. Moreover, we study domain independence and safety, characterise strong equivalence, and provide complexity results.
\item 
{Nested Formulas with Aggregates} due to Ferraris and Lifschitz~\myciteyear{ferraris2010stable} have semantics similar to that of HT. 
They allow for arbitrary aggregate functions but only over the integers, whereas we allow for arbitrary values 
using semiring operations. While defined, 
the usage of non-ground constraints in rule heads 
was not considered. We can transfer our results and show that both choice and minimised constraints can be encoded and used safely in the formalism.

\item 
{Gelfond-Zhang Aggregates}~\myciteyear{gelfond2014vicious} are semantically different from ours but presumably encodable in ASP(\AC). Their semantics introduced the vicious circle principle to ASP. Regarding expressiveness, aggregates are not allowed in rule-heads but aggregation functions are arbitrary.

\item 
Arbitrary Constraint Atoms due to Son et al.~(2007) 
as well as (monotone) Abstract Constraint Atoms due to Marek et al.~(2006) 
define semantics for constraints abstractly by allowing them to be specified as a set of alternative sets of atoms that need to be satisfied. Naturally, this gives a semantics to arbitrary constraints, however syntactic shorthands are desirable to avoid an exponential blowup of the representation of the constraints. Marek et al.\ focus on monotone constraints, showing that their behaviour can be characterised by fixed-points of a non-deterministic operator. 
\item 
{Weight Constraints with Conditionals} by Niemelä et al.~\myciteyear{niemela1999stable} introduced the well known shorthands $k \leq \{p(X) : q(X,W) = W\}$. Our constraints generalise them to arbitrary semirings.

\item Formalisms for Intensional Functions in ASP as in \cite{bartholomew2019first,cabalar2011functional} define a semantics that allows the definition of functions using ASP. A priori, this differs from our goal aiming at an expressive predicate-based formalism. Nevertheless, Weighted Formulas could be used to specify the values of functions. Semantically we are closer to Cabalar et al.'s
approach, where function values can be undefined and the stability condition is more similar.
\end{myitemize}

\paragraph{Summary \& Outlook} We have seen that algebraic constraints unify many previously proposed constructs for more succinct answer set programs, with low practical restrictions and no increase in the ground complexity. Among other novelties, we can specify whether constraints in rule-heads are minimised or guessed, can explicitly represent values from different sets and give an interesting alternative semantics for conditionals. Overall, the introduced framework opens up new possibilities for expressing programs succinctly and it gives rise to interesting questions.

We currently consider only a fragment of the weighted formulas. It would be interesting to see in the future, if other new and useful constructs can be expressed with a different fragment. Besides this, we want to use the general applicability of HT and Weighted Logic and extend ASP(\AC)\;to other domains, like temporal reasoning.
Furthermore, an in-depth study of suitable conditions for finite groundability and the non-ground complexity in this context are indispensable for our ongoing work on an implementation. 

\section*{Acknowledgement}
Thanks to the reviewers for their constructive comments. This work has been supported by FWF project W1255-N23 and by FFG project 861263. 
\nocite{son2007answer,marek2006logic}

\bibliography{new_tlp2egui.bib}
\bibliographystyle{acmtrans.bst}

%
\includepdf[pages=1-last]{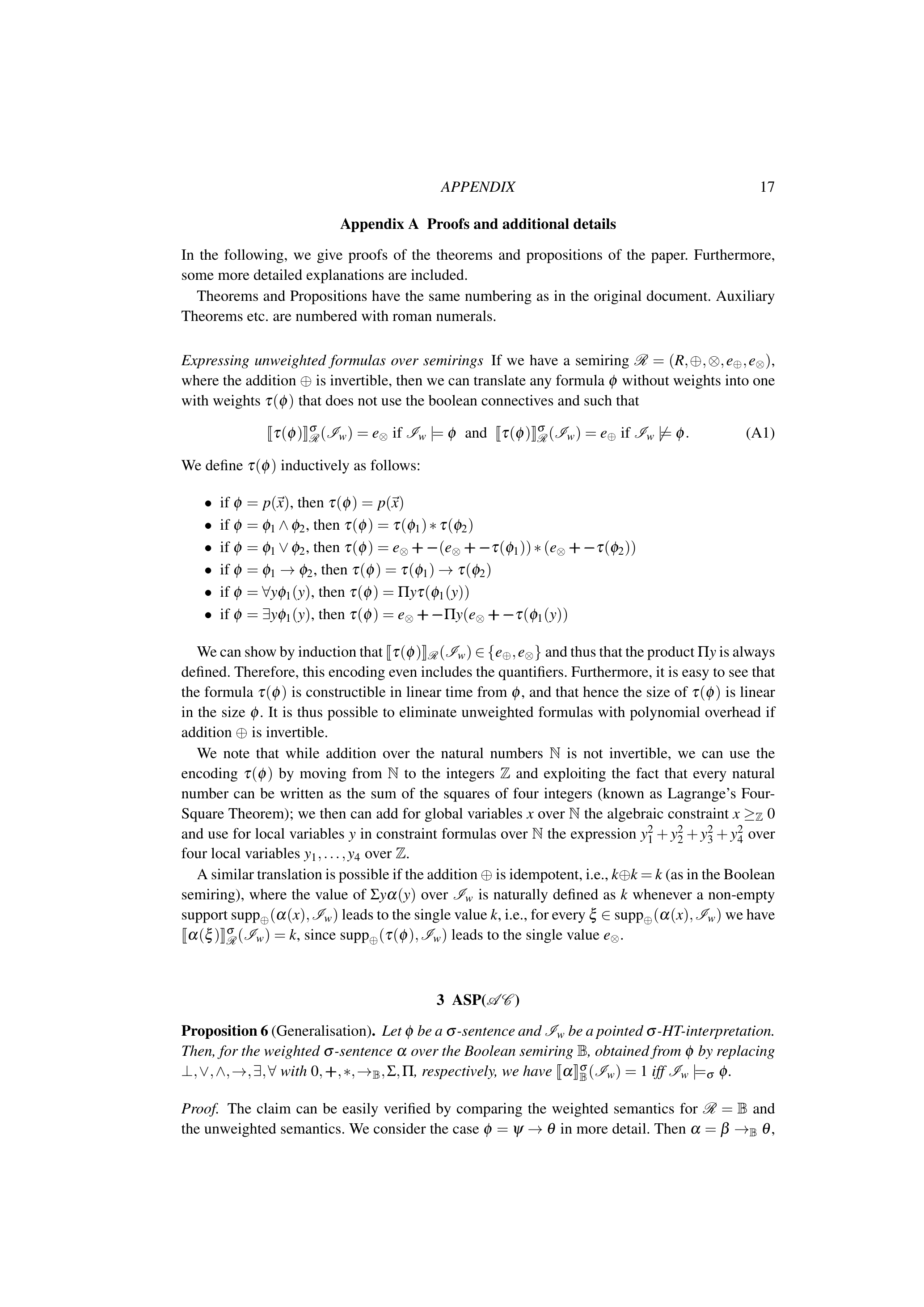}
\end{document}